\algnewcommand{\Inputs}[1]{%
  \State \textbf{Inputs:}
  \Statex \hspace*{\algorithmicindent}\parbox[t]{.8\linewidth}{\raggedright #1}
}
\algnewcommand{\Initialize}[1]{%
  \State \textbf{Initialize:}
  \Statex \hspace*{\algorithmicindent}\parbox[t]{.8\linewidth}{\raggedright #1}
}
\DeclareMathOperator*{\argmax}{arg\,max}
\DeclareMathOperator*{\argmin}{arg\,min}
\newtheorem{theorem}{Theorem}
\newtheorem{lemma}{Lemma}
\newtheorem{assumption}{Assumption}
\title{Multinomial Logit Bandit with Linear Utility Functions}
\author{
Mingdong Ou, 
Nan Li, 
Shenghuo Zhu, 
Rong Jin, 
\\ 
Alibaba Group, Hang Zhou, China \\
%
\{mingdong.omd, nanli.ln, shenghuo.zhu, jinrong.jr\}@alibaba-inc.com,
}
\begin{document}

\maketitle

%
%
%
\begin{abstract}
Multinomial logit bandit is a sequential subset selection problem which arises in many applications. In each round, the player selects a $K$-cardinality subset from $N$ candidate items, and receives a reward which is governed by a {\it multinomial logit} (MNL) choice model considering both item utility and substitution property among items. The player's objective is to dynamically learn the parameters of MNL model and maximize cumulative reward over a finite horizon $T$. 
This problem faces the exploration-exploitation dilemma, and the involved combinatorial nature makes it non-trivial. 
In recent years, there have developed some algorithms by exploiting specific characteristics of the MNL model, but all of them estimate the parameters of MNL model separately and incur a regret no better than $\tilde{O}\big(\sqrt{NT}\big)$ which is not preferred for large candidate set size $N$. 
In this paper, we consider the {\it linear utility} MNL choice model whose item utilities are represented as linear functions of $d$-dimension item features, and propose an algorithm, titled {\bf LUMB}, to exploit the underlying structure. It is proven that the proposed algorithm achieves $\tilde{O}\big(dK\sqrt{T}\big)$ regret which is free of candidate set size.  Experiments show the superiority of the proposed algorithm.
\end{abstract}

\section{Introduction}\label{sec:intro}
In traditional stochastic multi-armed bandit (MAB)~\cite{Bubeck12}, the player selects {\it one} from $N$ items and receives a reward corresponding to that item in each round. 
The objective is to maximize cumulative reward over a finite horizon of length $T$, or alternatively, minimize the regret relative to an oracle. Typically, algorithms are designed based on appropriate exploration-exploitation tradeoff
which allows the player to identify the best item through exploration whilst not spending too much on sub-optimal ones, and the family of upper confidence bound (UCB) algorithms~\cite{auer2002using,chu2011contextual} and Thompson sampling (TS) \cite{thompson1933,agrawal2012analysis} are representative examples. 
This paper studies a combinatorial variant of MAB, where in each round, the player offers a subset of cardinality $K$ to a user, and receives the reward associated with one of the items in the selected subset\footnote{This problem is known as dynamic assortment selection in the literature~\cite{caro2007dynamic,rusmevichientong2010dynamic}, where the selected subset of items forms an assortment.}.
The player faces the problem of determining which subset of items to present to users who arrive sequentially, whilst not knowing user preference. Similar to MAB, we need to solve the exploration-exploitation tradeoff in this problem. However, a naive translation of this problem to MAB is prohibitive, since the number of possible $K$-cardinality subsets is exponentially large and cannot be efficiently explored within a reasonable sized time horizon. To tackle this issue, different strategies have been proposed in the literature, {\it e.g.}, \cite{kveton2015tight,Paul16,agrawal2017mnl}.

In recent literature, the {\it multinomial logit} (MNL) choice model~\cite{luce2005individual,plackett1975analysis} which is widely used in economics is utilized to model user choice behavior by specifying the probability that a user selects an item given the offered set, and above exploration-exploitation problem is referred as the MNL-Bandit problem~\cite{rusmevichientong2010dynamic,agrawal2017mnl,agrawal2017thompson,cheung2017assortment}. Unlike other combinatorial bandit problems, MNL-Bandit problem considers the substitution property among items and leads to non-monotonic reward function. By exploiting specific characteristics of the MNL model, UCB-style~\cite{agrawal2017mnl} and TS-style~\cite{agrawal2017thompson} algorithms have been developed to dynamically learn the parameters of the MNL model which are a priori unknown, achieving a regret of $\tilde{O}\big(\sqrt{NT}\big)$ under a mild assumption. Also, a lower regret bound is established in~\cite{agrawal2017mnl}, by showing that any algorithm based on the MNL choice model must incur a regret of $\Omega\big(\sqrt{NT/K}\big)$. It is easy to find that the regret depends on the number of candidate items $N$, making them less preferred in many large scale applications such as online advertising.

In this paper, we use the {\it linear utility} MNL choice model (formulated in Section~\ref{sec:formulation}) to model user choice behavior given a set of items, rather than traditional MNL model. Specifically, it is assumed that each item in candidate set is described by a $d$-dimension feature vector, and item utilities of the MNL model can be formulated as linear functions of item features. Based on this, the problem of estimating item utilities (\textit{i.e.}, parameters of the MNL model) is changed to estimating underlying model parameters of linear functions. Since the number of parameters is irrelevant with the number of items, it is possible to achieve more efficient solution when the number of items is large. 
By taking the UCB approach, we propose an algorithm, titled {LUMB} (which is short for {Linear Utility MNL-Bandit}), to dynamically learn the parameters and narrow the regret. The main contributions of this work include:
\begin{itemize}
\item To the best of our knowledge, this is the first to use linear utility MNL choice model in sequential subset selection problem. Also, an UCB-style algorithm LUMB is proposed to learn the model parameters dynamically. 
\item An upper regret bound $O\big(dK\sqrt{T}\left(\log{T}\right)^2\big)$ is established for the proposed LUMB algorithm. This regret bound is free of candidate item set size, which means that LUMB can be applied to large item set.
\item Empirical studies demonstrate the superiority of the proposed LUMB algorithm over existing algorithms.
\end{itemize}

The rest of this paper is organized as follows. Section~\ref{sec:related} briefly introduces related work. Section~\ref{sec:formulation} and \ref{sec:algo} present problem formulation and the LUMB algorithm. Section~\ref{sec:regret} establishes regret analysis. Section~\ref{sec:exp} summarizes the experiments, and Section~\ref{sec:conclusion} concludes this work with future directions.

\section{Related Work}\label{sec:related}
Classical bandit algorithms aim to find the best arm with exploration-exploitation strategy. \citeauthor{auer2002using}~[\citeyear{auer2002using}] first proposes a UCB approach in linear payoff setting. \citeauthor{dani2008stochastic}~[\citeyear{dani2008stochastic}] and \citeauthor{abbasi2011improved}~[\citeyear{abbasi2011improved}] propose improved algorithms which bound the linear parameters directly.  \citeauthor{agrawal2013thompson}~[\citeyear{agrawal2013thompson}] propose a Thompson sampling approach. However, because the reward of a subset is not a linear function of item features in the subset, these works cannot be directly applied to our problem.

Another class of bandit works related to our work is combinatorial bandit where the player selects a subset of arms and receive a collective reward in each round. Researchers study the problem mainly on two settings, stochastic setting~\cite{gai2012combinatorial,russo2014learning,kveton2015tight} and adversarial setting~\cite{cesa2012combinatorial,audibert2013regret}. \citeauthor{gai2012combinatorial}~[\citeyear{gai2012combinatorial}] first learn the problem in linear reward setting and \citeauthor{kveton2015tight}~[\citeyear{kveton2015tight}] prove a tight regret bound. It is generalized to non-linear rewards in~\cite{chen2016combinatorial,chen2013combinatorial}. \citeauthor{wen2015efficient}~[\citeyear{wen2015efficient}] and \citeauthor{wang2017efficient}~[\citeyear{wang2017efficient}] propose contextual algorithms which can handle large item sets. However, these works imply that the reward is monotonic which is not satisfied in MNL-Bandit (Section~\ref{sec:formulation}). In practice, as clarified in~\cite{cheung2017assortment}, the low-reward item may divert the attention of user and lead to lower subset reward.

\citeauthor{rusmevichientong2010dynamic}~[\citeyear{rusmevichientong2010dynamic}] solve the MNL-Bandit problem and achieve instance-dependent upper regret bound $O\left(logT\right)$, and \citeauthor{saure2013optimal}~[\citeyear{saure2013optimal}] extend to a wider class of choice models. \citeauthor{agrawal2017mnl}~[\citeyear{agrawal2017mnl}] propose a UCB approach and achieve instance-independent upper regret bound $O\left(\sqrt{NT}\right)$. \citeauthor{agrawal2017mnl}~[\citeyear{agrawal2017mnl}] propose a Thompson sampling approach with better empirical performance. Recently, some works begin to study variants of classic MNL-Bandit problem. Some works learn the personalized MNL-Bandit problem~\cite{kallus2016dynamic,bernstein2017dynamic,golrezaei2014real}.  \citeauthor{cheung2017assortment}~[\citeyear{cheung2017assortment}] learn the problem with resource constraints. However, as clarified in Section~\ref{sec:intro}, these works model item utility separately which is not feasible for large item sets.

\section{Problem Formulation}\label{sec:formulation}
Suppose there is a candidate item set, $\mathcal{S}=\{1,\cdots,N\}$, to offer to users. Each item $i$ corresponds to a reward $r_i\geq 0$ and a feature vector $\mathbf{x}_i \in \mathbf{R}^d$. Let $\mathbf{X}=[\mathbf{x}_1,\cdots,\mathbf{x}_N]$ be the feature matrix. In MNL-Bandit, at each time step $t$, the player selects a subset $\overline{S}_t\in\mathcal{C}\left(\mathcal{S}\right)=\{S|S\subseteq\mathcal{S},|S|\leq K\}$ and observes the user choice $c_t\in \overline{S}_t\cup \{0\}$, where $c_t=0$ represents that the user chooses nothing from $\overline{S}_t$. The objective is to design a bandit policy to approximately maximize the expected cumulative reward of chosen items, {\it i.e.}, $\mathbb{E}\left(\sum_{t}{r_{c_t}}\right)$. 

According to above setting, in each time step $t$, a bandit policy is only allowed to exploit the item features $\{\mathbf{x}_1, \cdots,\mathbf{x}_N\}$, historical selected subsets, $\{\overline{S}_\tau|\tau<t\}$ and the user choice feedbacks, $\{c_\tau|\tau<t\}$.

The user choice follows the MNL model~\cite{luce2005individual,plackett1975analysis}.  MNL assumes substitution property among items in a selected subset. Specifically, for item $i$, the larger the utilities of the other items, the smaller the chosen probability of item $i$ is. The choice probability follows a multinomial distribution,
\begin{align}
p_i\left(\overline{S}_t, \mathbf{v}\right)=
\begin{cases}
\dfrac{v_i}{1+\sum_{j\in \overline{S}_t}{v_j}}, & i \in \overline{S}_t \cr \dfrac{1}{1+\sum_{j\in \overline{S}_t}{v_j}}\ , & i=0 \cr 0, & \text{otherwise}\ ,
\end{cases}
\end{align}
where $v_i\geq 0$ is item utility which is a priori unknown to the player. The choice probability of item $i$ in selected subset, $\overline{S}_t$, is linear with its utility, $v_i$. Besides, it is possible that nothing is chosen which is realized by adding a virtual item $0$ with utility $1$.
With the MNL model, we have the expected reward under given utility vector, $\mathbf{v}=[v_1,\cdots,v_N]$,
\begin{align}
R\left(\overline{S}_t,\mathbf{v}\right)=\sum\nolimits_{i\in \overline{S}_t}p_i\left(\overline{S}_t,\mathbf{v}\right)r_i=\dfrac{\sum_{i\in \overline{S}_t}{v_ir_i}}{1+\sum_{i\in \overline{S}_t}{v_i}}\ . \label{equ:reward}
\end{align}
Note that the expected reward is non-monotonic, that is, both the addition of low-reward item to selected subset and increment on utility of low-reward item may lead to lower reward.
The expected cumulative reward is
\begin{align}\label{equ:cumrw}
\mathbb{E}\left(\sum\nolimits_{t}{r_{c_t}}\right)=\sum\nolimits_{t}{\mathbb{E}\left(r_{c_t}\right)}=\sum\nolimits_{t}{\mathbb{E}\left(R\left(\overline{S}_t,\mathbf{v}\right)\right)}\ .
\end{align}
Since direct analysis of (\ref{equ:cumrw}) is not tractable when $\mathbf{v}$ is unknown, we analyze the regret instead,
\begin{align}
Reg\left(T,\mathbf{v}\right)=\sum\nolimits_{t=1}^{T}{\left(R\left(S^*,\mathbf{v}\right)-\mathbb{E}\left(R\left(\overline{S}_t,\mathbf{v}\right)\right)\right)}\ , \label{equ:regret}
\end{align}
where $T$ is the length of time horizon and $S^*$ is the optimal subset,
\begin{align}
S^*=\displaystyle \mathop{\argmax}_{S\in\mathcal{C}\left(\mathcal{S}\right)}{R\left(S, \mathbf{v}\right)} \nonumber\ .
\end{align}
Naturally, the objective is to approximately minimize the expected cumulative regret, $Reg\left(T,\mathbf{v}\right)$, with appropriate bandit policy. Especially, after enough time steps, an appropriate solution should almost achieve the subsets with highest reward, which implies that the cumulative regret, $Reg\left(T,\mathbf{v}\right)$, should be sub-linear with $T$. As each item corresponds to an utility which needs to be estimated separately, this makes the lower cumulative regret bound relevant with item number and will be not feasible for large item sets.

Therefore, linear item utility is introduced where item utility is a linear function of item feature,
\begin{align}
v_i=\bm{\theta}^{*\top}\mathbf{x}_i\ ,
\end{align}
where $\bm{\theta}^{*}$ is a linear parameter vector unknown to the player. Thus, estimating item utilities will be changed to estimating utility function parameters which can exploit the correlation between items on features, then it is potential to achieve regret bound free of item number, $N$.

\section{Algorithm}\label{sec:algo}
We propose an algorithm, called \emph{Linear Utility MNL-Bandit} (LUMB), which proposes a UCB approach to sequentially estimate linear utility function and approach highest reward. LUMB first estimates the linear parameters of utility function, then constructs the UCB of item utility and subset reward, finally offers the subset with highest reward UCB. Algorithm \ref{alg:LUMB} clarifies the detail of LUMB.

\subsection{Estimation of Linear Utility Function} 
As the choice probability of an item is non-linear with the parameters of utility function, it is difficult to estimate the parameters directly with user choice feedback. Instead, we split the time horizon into epochs like~\cite{agrawal2017mnl}. Let $L$ be the number of epochs. In each epoch $l$, the selected subset, $S_l\in\mathcal{C}\left(\mathcal{S}\right)$, is offered repeatedly until that the user chooses nothing from the offered subset. Then, we can obtain the chosen times of each item $i \in S_l$,
\begin{align}
&\hat{v}_{i,l}=\sum\nolimits_{t\in\mathcal{E}_l}{\mathbb{I}\left(c_t=i\right)}  \label{equ:cins} \ ,\\
\text{s.t.}\quad&\mathbb{I}\left(c_t=i\right)=
\begin{cases}
1, & c_t=i \cr 0, & c_t\neq i \ ,
\end{cases} \nonumber
\end{align}
where $\mathcal{E}_l$ is the set of time steps in epoch $l$, $c_t$ is the chosen item in time step $t$.

It can be proven that $\mathbb{E}\left(\hat{v}_{i,l}\right)=v_i$ (Lemma \ref{lem:ctimes}), which means the empirical average of $\hat{v}_{i,l}$ is almost equal to real item utility and irrelevant to other items in the subset. Thus, the estimation of utility function can be simply formulated as a linear regression which directly approaches empirical samples of $\hat{v}_{i,l}$. Specifically,
\begin{align}
\bm{\theta_l}=\displaystyle \mathop{\argmin}_{\bm{\theta}}{\sum_{\tau\leq l}{\sum_{i\in S_\tau}{\|\bm{\theta}^\top\mathbf{x}_i-\hat{v}_{i,l}\|_2^2}}+\lambda\|\bm{\theta}\|_2^2} \nonumber \ ,
\end{align}
where $\lambda$ is a constant regularization coefficient.  Then, we can obtain close-form solution
\begin{align}
\mathbf{A}_l &= \lambda\mathbf{I}_d+\sum\nolimits_{\tau\leq l}{\sum\nolimits_{i\in S_\tau}{\mathbf{x}_i\mathbf{x}_i^\top}} \ ,\\
\mathbf{b}_l &= \sum\nolimits_{\tau\leq l}{\sum\nolimits_{i\in S_\tau}{\hat{v}_{i,\tau}\mathbf{x}_i}} \ ,\\
\theta_l&=\mathbf{A}_l^{-1}\mathbf{b}_l \ ,
\end{align}
where $\mathbf{I}_d$ is a $d$-by-$d$ identity matrix. 

\begin{algorithm}[t]
\caption{Linear Utility MNL-Bandit}
\label{alg:LUMB}
\begin{algorithmic}[1]
\Inputs{
$\mathcal{S}=\{1,\cdots,N\}$, $\mathbf{X}=[\mathbf{x}_1,\cdots,\mathbf{x}_N]$,  $\mathbf{r}=[r_1,\cdots,r_N]$
}
\Initialize{
$\theta_0\gets[0]^d$, $\mathbf{A_0}\gets\mathbf{I}_d$, $\mathbf{b_0}\gets[0]^d$,\\
$v_{i,0}^{\rm UCB}\gets\dfrac{\sqrt{2}+\alpha}{\lambda}\|\mathbf{x}_i\|,\forall i\leq N$\\
$t\gets 1$, $l\gets 1$, $\mathcal{E}_1\gets\emptyset$, $c_0\gets 0$
}
\Repeat
\If{$c_{t-1}=0$}
\State Compute $S_l\gets\displaystyle \mathop{\argmax}_{S\in\mathcal{S}}{R\left(S,\mathbf{v}_{l-1}^{\rm UCB}\right)}$
\EndIf
\State Offer subset $S_l$, observe the user choice $c_t$.
\If {$c_t=0$} 
\For{$i\in S_l$}
\State compute $\hat{v}_{i,l}\gets\sum_{t\in \mathcal{E}_l}{\mathbb{I}\left(c_t=i\right)}$
\EndFor
\State update $\mathbf{b}_{l}\gets\mathbf{b}_{l-1}+\sum_{i\in S_l}{\hat{v}_{i,l}\mathbf{x}_i}$
\State update $\mathbf{A}_{l}\gets\mathbf{A}_{l-1}+\sum_{i\in S_l}{\mathbf{x}_i\mathbf{x}_i^\top}$
\State update $\theta_{l}\gets\mathbf{A}_{l}^{-1}\mathbf{b}_{l}$
\For{$i\in \mathcal{S}$}
\State $v_{i,l}^{\rm UCB}\gets\theta_{l}^\top\mathbf{x}_i+\left(\sqrt{2}+\alpha\right)\sqrt{\mathbf{x}_i^\top\mathbf{A}_l^{-1}\mathbf{x}_i}$
\EndFor
\State $l\gets l+1$\
\State $\mathcal{E}_l\gets\emptyset$
\Else
\State $\mathcal{E}_l\gets\mathcal{E}_l\cup t$
\EndIf
\State $t\gets t+1$
\Until{$t<T$}
\end{algorithmic}
\end{algorithm}

\subsection{Construction of Upper Confidence Bound}
We construct the UCB of item utility, which is proven in Lemma~\ref{lem:ucb}, as
\begin{align}
&v_{i,l}^{\rm UCB}=\theta_l^\top\mathbf{x}_i+\left(\sqrt{2}+\alpha\right)\sigma_{i,l}\label{equ:vucb} \ , \\
\text{s.t.}\quad& \sigma_{i,l}=\sqrt{\mathbf{x}_i^\top\mathbf{A}_l^{-1}\mathbf{x}_i}  \nonumber
\end{align}
where $\alpha$ is constant. Let $\mathbf{v}_l^{\rm UCB}=[v_{1,l}^{\rm UCB},\cdots,v_{N,l}^{\rm UCB}]$.
Then, the UCB of the highest reward, $R\left(S^*,\mathbf{v}\right)$, is constructed as the highest reward with $\mathbf{v}_l^{\rm UCB}$ (Lemma~\ref{lem:reward}). The corresponding subset is
\begin{align}
S_{l+1}=\displaystyle \mathop{\argmax}_{S\in\mathcal{C}\left(\mathcal{S}\right)}{R\left(S, \mathbf{v}_l^{\rm UCB}\right)}\ ,
\end{align}
and we offer the subset $S_{l+1}$ in epoch $l+1$.

It is hard to get $S_{l+1}$ by directly solving the above optimization problem. According to \cite{davis2013assortment}, the above optimization problem can be translated to a linear program problem,
\begin{align}
\max\quad&{\sum\nolimits_{i=1}^N{r_iw_i}} \ ,\\
\text{s.t.}\quad&w_0+\sum\nolimits_{i=1}^N{w_i} = 1,\quad\sum\nolimits_{i=1}^N{\dfrac{w_i}{v_i^{\rm UCB}}}\leq Kw_0 \ ,\nonumber \\
& \forall i\in\mathcal{S}, 0\leq\dfrac{w_i}{v_i^{\rm UCB}}\leq w_0\nonumber \ .
\end{align}
Then, $S_{l+1}=\{i|w_i>0, i>0\}$.

\section{Regret Analysis} \label{sec:regret}
In this section, we analyze the upper regret bound of Algorithm \ref{alg:LUMB} to theoretically identify the convergence performance. 
Without loss of generality, we first declare the assumption in the following analysis.
\begin{assumption}\label{asum:para}
$\forall i\in\mathcal{S}, r_i\leq 1, \|\mathbf{x}_i\|\leq 1, \|\bm{\theta}^*\|\leq 1$.
\end{assumption}
According to the assumption, we have that $\forall i\in\mathcal{S}, v_i\leq 1$. Moreover, we let $\lambda=1$. Then, we give the upper bound of regret in Theorem \ref{thm:regret} in advance which is proven in Section \ref{sec:ubr}. We can achieve result similar to Theorem \ref{thm:regret} when the parameters in Assumption~\ref{asum:para} are bounded by finite constants.  
\begin{theorem}\label{thm:regret}
Following the process in Algorithm~\ref{alg:LUMB}, let 
\begin{align}
\beta&=2\log_2{T}, \nonumber \\
\alpha&=\beta\sqrt{2\log{\left(2\sqrt{T}\left(1+\dfrac{T}{d}\right)^{d/2}\right)}}, \nonumber
\end{align}
then the upper bound of $Reg\left(T,\mathbf{v}\right)$ is 
\begin{align}
O\left(dK\sqrt{T}\left(\log{T}\right)^2\right) \ .
\end{align}
\end{theorem}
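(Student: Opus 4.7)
The plan is to bound $Reg(T,\mathbf{v})$ by decomposing it epoch by epoch, controlling the per-epoch regret through the confidence widths $\sigma_{i,l-1}$ using optimism, and then aggregating via Cauchy--Schwarz together with an elliptical-potential inequality.

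First, I would define a ``good event'' $\mathcal{G}$ on which $v_i \le v_{i,l}^{\rm UCB} \le v_i + 2(\sqrt{2}+\alpha)\sigma_{i,l}$ holds for every item $i$ and every epoch $l\le L$. The constants $\alpha,\beta$ in the theorem are tuned so that $\Pr[\mathcal{G}^c]=O(1/T)$, via a standard self-normalized martingale concentration for the ridge estimator $\theta_l$: the factor $\beta=2\log_2 T$ acts as an effective sub-Gaussian parameter for the noise $\hat v_{i,l}-v_i$, obtained by truncating the (essentially geometric, by Lemma~\ref{lem:ctimes}) count $\hat v_{i,l}$ at order $\log T$, and the remaining factor in $\alpha$ is the usual $\sqrt{2\log(\delta^{-1}(\det\mathbf{A}_l/\det(\lambda\mathbf{I}))^{1/2})}$ with $\det\mathbf{A}_L\le(1+T/d)^d$. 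The expected regret contribution on $\mathcal{G}^c$ is $O(T)\cdot O(1/T)=O(1)$.

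Next, on $\mathcal{G}$, optimism (Lemma~\ref{lem:reward} plus the definition of $S_l$ as the UCB-maximizer) yields $R(S^*,\mathbf{v}) \le R(S_l,\mathbf{v}_{l-1}^{\rm UCB})$, so the per-step regret in epoch $l$ is at most $R(S_l,\mathbf{v}_{l-1}^{\rm UCB})-R(S_l,\mathbf{v})$. Exploiting the explicit form of the MNL reward together with $v_{i,l-1}^{\rm UCB}-v_i\le 2(\sqrt{2}+\alpha)\sigma_{i,l-1}$, I would show this gap is at most a constant times $\alpha\,\max_{i\in S_l}\sigma_{i,l-1}$. Since epoch $l$'s length, given $S_l$, is geometric with mean $1+V(S_l)\le 1+K$, the per-epoch expected regret is $O\!\bigl(K\,\alpha\,\max_{i\in S_l}\sigma_{i,l-1}\bigr)$. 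Summing over the $L\le T$ epochs and applying Cauchy--Schwarz yields
\[
\sum_{l=1}^L \max_{i\in S_l}\sigma_{i,l-1} \;\le\; \sqrt{T}\,\sqrt{\sum_{l=1}^L\sum_{i\in S_l}\mathbf{x}_i^\top\mathbf{A}_{l-1}^{-1}\mathbf{x}_i},
\]
and the block-update elliptical-potential inequality bounds the inner sum by $2d\log(1+T/d)=O(d\log T)$. Combining with $\alpha=O(\sqrt{d}(\log T)^{3/2})$ obtained from the theorem's formula gives $O(K)\cdot O(\sqrt{d}(\log T)^{3/2})\cdot\sqrt{T\cdot d\log T}=O\!\bigl(dK\sqrt{T}(\log T)^2\bigr)$, as claimed.

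The main obstacle I anticipate is the per-epoch step: showing that $R(S_l,\mathbf{v}_{l-1}^{\rm UCB})-R(S_l,\mathbf{v})$ is controlled by the confidence widths without an undesirable blow-up in $V(S_l)$. The MNL reward is non-monotonic, and its gradient $\partial R/\partial v_i=(r_i-R)/(1+V(S_l))$ couples the numerator and denominator of $R(S,\mathbf{v})$; the clean argument is to integrate this derivative along the segment from $\mathbf{v}$ to $\mathbf{v}_{l-1}^{\rm UCB}$, let the $1/(1+V(S_l))$ factor cancel against the geometric epoch-length factor $1+V(S_l)$, and then pass to a $\max_i$ (paying a factor of $K$) to get the desired form. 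A secondary subtlety is making the elliptical-potential step rigorous in the block-update setting in which multiple items are added to $\mathbf{A}_l$ simultaneously rather than one at a time; this is handled by expanding $\log\det(\mathbf{A}_l/\mathbf{A}_{l-1})$ and using $\det(I+M)\ge 1+\mathrm{tr}(M)$ for PSD $M$.
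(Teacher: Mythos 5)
Your proposal is correct and follows essentially the same route as the paper: a good/bad-event split on the validity of the utility UCBs (including the truncation $\hat v_{i,\tau}\le\beta$), optimism via Lemma~\ref{lem:reward} to reduce the per-epoch regret to the gap $R(S_l,\mathbf{v}^{\rm UCB})-R(S_l,\mathbf{v})$, cancellation of the geometric epoch length against the $1/(1+V(S_l))$ factor (the paper does this via the exact identity $\mathbb{E}(\sum_{t\in\mathcal{E}_l}r_{c_t})=\sum_{i\in S_l}v_ir_i$ in Lemma~\ref{lem:dreward}), and an elliptical-potential aggregation of the confidence widths as in Lemma~\ref{lem:sigma}. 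The only harmless quantitative slip is the claim that the stated $\alpha$ gives $\Pr\left(\mathcal{G}^c\right)=O(1/T)$ and an $O(1)$ bad-event contribution; with the paper's tuning the bad-event term is of order $K\sqrt{T}$, which is still dominated by the main term.
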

The proof is separated into three steps. We first prove the correctness of the constructed UCB of utility and the cumulative deviation between UCB of utility  and real utility which is sublinear with respect to $T$. Then we prove that the deviation between UCB of reward and real reward can be bounded by deviation of utility, finally the upper bound of cumulative regret can be proved by combining the above two results.

\subsection{Analysis of Utility}
We first prove the distribution of $\hat{v}_{i,l}$.
\begin{lemma}\label{lem:ctimes}
With the definition in Eq.~(\ref{equ:cins}), $\forall l\leq L, i\in S_l$, $\hat{v}_{i,l}$ follows geometric distribution, that is
\begin{align}
\mathbb{P}\left(\hat{v}_{i,l}=\beta\right)&=\dfrac{1}{1+v_i}\left(\dfrac{v_i}{1+v_i}\right)^\beta, \forall \beta\geq 0\ ,\\
\mathbb{E}\left(\hat{v}_{i,l}\right)&=v_i \ .
\end{align}
\end{lemma}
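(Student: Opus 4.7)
The plan is to fix an epoch $l$ and analyze the i.i.d. sequence of user choices within it. Let $V_l = \sum_{j \in S_l} v_j$. Because the same subset $S_l$ is offered at every time step of the epoch, the choices $c_t$, $t \in \mathcal{E}_l \cup \{\text{terminating step}\}$, form an i.i.d. sequence drawn from the MNL distribution, with
\begin{align*}
\mathbb{P}(c_t = 0) &= \tfrac{1}{1+V_l}, \qquad
\mathbb{P}(c_t = i) = \tfrac{v_i}{1+V_l} \ (i \in S_l),
\end{align*}
and the epoch terminates at the first index where $c_t = 0$. The count $\hat{v}_{i,l}$ is exactly the number of occurrences of item $i$ among the choices strictly before this first $0$.

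Next I would reduce to a Bernoulli sequence by ignoring the ``irrelevant'' items in $S_l \setminus \{i\}$. Restrict the full i.i.d. sequence to the subsequence of trials whose outcome lies in $\{0, i\}$. By the standard conditioning argument for i.i.d. categorical trials, this subsequence is itself i.i.d., with conditional probabilities
\begin{align*}
\mathbb{P}(c_t = 0 \mid c_t \in \{0,i\}) &= \tfrac{1}{1+v_i}, \\
\mathbb{P}(c_t = i \mid c_t \in \{0,i\}) &= \tfrac{v_i}{1+v_i}.
\end{align*}
Moreover, inserting ``other'' trials between these does not affect either the count of $i$'s before the first $0$ or the identity of the first $0$. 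Hence $\hat{v}_{i,l}$ equals the number of $i$'s preceding the first $0$ in a Bernoulli sequence with success probability $1/(1+v_i)$ for the terminating symbol.

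Then the first claim is immediate: for any integer $\beta \geq 0$,
\begin{align*}
\mathbb{P}(\hat{v}_{i,l} = \beta) = \left(\tfrac{v_i}{1+v_i}\right)^{\beta} \cdot \tfrac{1}{1+v_i},
\end{align*}
which is the asserted geometric law. The second claim follows by summing the series, $\mathbb{E}(\hat{v}_{i,l}) = \sum_{\beta \geq 0} \beta \left(\tfrac{v_i}{1+v_i}\right)^\beta \tfrac{1}{1+v_i} = v_i$, using $\sum_{\beta \geq 0} \beta q^\beta = q/(1-q)^2$ with $q = v_i/(1+v_i)$. There is no real obstacle here; the only subtle point is justifying the reduction to a Bernoulli subsequence, which I would make precise by writing $\hat{v}_{i,l} = \sum_{k} \mathbb{I}[c_{\tau_k} = i]$ where $\{\tau_k\}$ enumerates trials in $\{0,i\}$ up to and including the first $0$, and invoking i.i.d.\ thinning.
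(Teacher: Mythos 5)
Your proof is correct, and it reaches the geometric law by a different route than the paper. The paper conditions on the epoch length: it writes $\mathbb{P}(\hat{v}_{i,l}=\beta)$ as a sum over $n\geq\beta$ of (probability the epoch has $n$ non-null choices) times (binomial probability that $\beta$ of those $n$ choices equal $i$), and then collapses the resulting series via the Pascal identity $\binom{n}{\beta}=\binom{n-1}{\beta}+\binom{n-1}{\beta-1}$ into the recursion $\mathbb{P}(\hat{v}_{i,l}=\beta)=\tfrac{v_i}{1+v_i}\,\mathbb{P}(\hat{v}_{i,l}=\beta-1)$. You instead thin the i.i.d.\ categorical sequence to the trials with outcome in $\{0,i\}$, observe that this subsequence is i.i.d.\ Bernoulli with $\mathbb{P}(0\mid\{0,i\})=\tfrac{1}{1+v_i}$, and read off the geometric distribution as ``number of $i$'s before the first $0$.'' Your argument buys a cleaner conceptual picture, avoids the series manipulation entirely, and makes it transparent why the answer does not depend on the other items in $S_l$; its one obligation is the i.i.d.\ thinning step, which you correctly flag and which is standard (the first $0$ of the full sequence is the first $0$ of the thinned one, and every occurrence of $i$ survives the thinning). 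The paper's computation is more elementary and self-contained but obscures that independence-from-$S_l\setminus\{i\}$ structure. Both yield $\mathbb{E}(\hat{v}_{i,l})=v_i$ immediately from the mean of a geometric; your series evaluation with $q=v_i/(1+v_i)$ is correct.
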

Proof is in Appendix~\ref{sec:vprop}.
According to above Lemma, the deviation between $\hat{v}_{i,l}$ and real utility is unbounded. This makes the prove of utility UCB difficult. Fortunately, the probability $\mathbb{P}\left(\hat{v}_{i,l}>\beta\right)$ decays exponentially when $\beta$ increases. We bound $\hat{v}_{i,l}$ in a relative small interval with high probability. Then, we can prove the utility UCB as below.
\begin{lemma}\label{lem:ucb}
With definition of $v_{i,l}^{\rm UCB}$ in Eq.~(\ref{equ:vucb}) and definition of $\hat{v}_{i,l}$ in Eq.~(\ref{equ:cins}), if $\beta\geq 2\log_2{T}, \forall \tau\leq l, j\in S_l, \hat{v}_{j,\tau}\leq \beta$,  then $\forall i\in\mathcal{S}, l\leq L$, 
\begin{align}
0\leq v_{i,l}^{\rm UCB}-v_i\leq 2\left(\sqrt{2}+\alpha\right)\sigma_{i,l} \ ,
\end{align}
with probability at least 
\begin{align}
1-\left(1+\dfrac{T}{d}\right)^{d/2}\exp{\left(-\dfrac{\alpha^2}{2\beta^2}\right)}\ . \nonumber
\end{align}
\end{lemma}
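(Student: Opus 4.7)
The plan is to derive a ridge-regression-style confidence set along the lines of Abbasi-Yadkori, P\'al, and Szepesv\'ari: by Lemma~\ref{lem:ctimes} the noises $\eta_{i,\tau}:=\hat{v}_{i,\tau}-v_i$ are centred martingale increments (at the epoch level) with respect to the filtration $\mathcal{F}_\tau$ generated by the first $\tau$ epochs, and on the stated truncation event they are bounded in $[-1,\beta]$ and hence $\beta$-sub-Gaussian. The target is the symmetric bound $|\mathbf{x}_i^\top(\theta_l-\bm{\theta}^*)|\leq(\sqrt{2}+\alpha)\sigma_{i,l}$, from which both halves of the lemma follow at once by writing $v_{i,l}^{\rm UCB}-v_i=\mathbf{x}_i^\top(\theta_l-\bm{\theta}^*)+(\sqrt{2}+\alpha)\sigma_{i,l}$ and bounding that quantity on either side.

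First I would obtain the error decomposition from the normal equations. Using $v_i=\mathbf{x}_i^\top\bm{\theta}^*$, we have $\mathbf{A}_l\bm{\theta}^*=\lambda\bm{\theta}^*+\sum_{\tau\leq l}\sum_{i\in S_\tau}v_i\mathbf{x}_i$, so subtracting this from $\mathbf{A}_l\theta_l=\mathbf{b}_l$ gives $\theta_l-\bm{\theta}^*=\mathbf{A}_l^{-1}\bigl(\sum_{\tau,i}\eta_{i,\tau}\mathbf{x}_i-\lambda\bm{\theta}^*\bigr)$. A weighted Cauchy-Schwarz in the $\mathbf{A}_l$-geometry yields $|\mathbf{x}_i^\top(\theta_l-\bm{\theta}^*)|\leq\|\theta_l-\bm{\theta}^*\|_{\mathbf{A}_l}\sigma_{i,l}$, and the triangle inequality splits the weighted norm into a stochastic piece $\|\sum_{\tau,i}\eta_{i,\tau}\mathbf{x}_i\|_{\mathbf{A}_l^{-1}}$ plus a deterministic ridge-bias piece $\lambda\|\bm{\theta}^*\|_{\mathbf{A}_l^{-1}}\leq\sqrt{\lambda}\|\bm{\theta}^*\|\leq 1\leq\sqrt{2}$, using $\mathbf{A}_l^{-1}\preceq\lambda^{-1}I$ with $\lambda=1$ and Assumption~\ref{asum:para}. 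This is the source of the $\sqrt{2}$; it then remains to control the stochastic piece by $\alpha$.

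For the stochastic term I would invoke the self-normalized martingale tail inequality with sub-Gaussian constant $R=\beta$: on an event of probability at least $1-\delta$, $\|\sum_{\tau,i}\eta_{i,\tau}\mathbf{x}_i\|_{\mathbf{A}_l^{-1}}\leq\beta\sqrt{2\log(\det(\mathbf{A}_l)^{1/2}\lambda^{-d/2}/\delta)}$. Combining the trace/AM-GM bound $\det(\mathbf{A}_l)\leq(1+T/d)^d$ with the choice $\delta=(1+T/d)^{d/2}\exp(-\alpha^2/(2\beta^2))$ makes the right-hand side collapse exactly to $\alpha$, which also matches the failure probability declared in the statement. The main obstacle, beyond routine bookkeeping, is twofold: (i) within a single epoch the $\eta_{i,\tau}$ for different $i\in S_\tau$ are not independent (indeed $\sum_{i\in S_\tau}\hat{v}_{i,\tau}=|\mathcal{E}_\tau|$), so the concentration must be applied at the epoch level to the vector-valued increment $Z_\tau=\sum_{i\in S_\tau}\eta_{i,\tau}\mathbf{x}_i$, which by Lemma~\ref{lem:ctimes} satisfies $\mathbb{E}(Z_\tau\mid\mathcal{F}_{\tau-1})=\mathbf{0}$ and is bounded in $\mathbf{A}_l^{-1}$-norm on the truncation event; and (ii) conditioning on the truncation event could in principle disturb the martingale structure, which I would circumvent by passing to the truncated increments $\tilde{Z}_\tau=\sum_{i\in S_\tau}\eta_{i,\tau}\mathbb{I}(\hat{v}_{i,\tau}\leq\beta)\mathbf{x}_i$ (still a bounded martingale difference sequence, since the indicator threshold is a constant) and observing that $\tilde{Z}_\tau=Z_\tau$ on the event of interest, so the concentration bound transfers.
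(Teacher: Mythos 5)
Your overall architecture --- the normal-equation error decomposition $\theta_l-\bm{\theta}^*=\mathbf{A}_l^{-1}\bigl(\sum_{\tau,j}\eta_{j,\tau}\mathbf{x}_j-\lambda\bm{\theta}^*\bigr)$, the split into a self-normalized stochastic term plus a deterministic term, Lemma 9 of Abbasi-Yadkori et al.\ with a Hoeffding-type sub-Gaussian bound for bounded increments, the determinant bound, and the choice of $\delta$ --- is the same as the paper's. But there is one genuine gap, and it sits exactly at the point you flag as obstacle (ii). You claim that the truncated increments $\tilde{Z}_\tau=\sum_{i\in S_\tau}\eta_{i,\tau}\mathbb{I}(\hat{v}_{i,\tau}\leq\beta)\mathbf{x}_i$ are ``still a bounded martingale difference sequence, since the indicator threshold is a constant.'' A constant threshold keeps the increments adapted and bounded, but it does not keep them centred: $\hat{v}_{i,\tau}$ is geometric with mean $v_i$, so truncating from above removes only positive mass of $\hat{v}_{i,\tau}-v_i$ and gives $\mathbb{E}\bigl[(\hat{v}_{i,\tau}-v_i)\,\mathbb{I}(\hat{v}_{i,\tau}\leq\beta)\bigr]<0$. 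Equivalently, conditional on $\hat{v}_{i,\tau}\leq\beta$ the mean of $\hat{v}_{i,\tau}-v_i$ is $\epsilon_i=-(1+\beta)p_i^{\beta+1}/(1-p_i^{\beta+1})$ with $p_i=v_i/(1+v_i)$, not $0$. So $\tilde{Z}_\tau$ is not a martingale difference sequence, and the self-normalized inequality cannot be applied to it as you state.

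The gap is reparable, and the repair is precisely what the paper does: re-centre the truncated variables at $v_i+\epsilon_i$, apply the concentration inequality to $u_{i}=\hat{v}_{i,\tau}-v_i-\epsilon_i$ (now centred, and of width $\beta$ on the truncation event, hence $\beta$-sub-Gaussian), and carry the accumulated bias $\sum_{\tau\leq l}\sum_{j\in S_\tau}\epsilon_j\mathbf{x}_j$ into the deterministic term. This is also where the hypothesis $\beta\geq 2\log_2 T$ is actually used: it gives $|\epsilon_i|\leq(1+\beta)/2^{\beta}\leq(1+\log_2 T)/T^2$, so that by Cauchy--Schwarz the deterministic term is bounded by $\sqrt{\sum_{\tau,j}\epsilon_j^2+\|\bm{\theta}^*\|^2}\,\sigma_{i,l}\leq\sqrt{2}\,\sigma_{i,l}$. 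In other words, the $\sqrt{2}$ in the lemma is not just the ridge bias rounded up from $1$, as your accounting suggests; it is the budget that absorbs both the ridge bias and the truncation bias. Your bound $\lambda\|\bm{\theta}^*\|_{\mathbf{A}_l^{-1}}\leq 1\leq\sqrt{2}$ happens to leave exactly the right amount of slack, but your argument never spends it on the bias, because the bias has been (incorrectly) assumed away.
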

Below is a brief proof, and detailed proof is in Appendix~\ref{sec:ucbproof}.
\begin{proof}
Let $\Delta_{i,l}=|\bm{\theta}_l^\top\mathbf{x}_i-v_i|$, we just need to prove
\begin{align}
\Delta_{i,l}\leq\left(\sqrt{2}+\alpha\right)\sigma_{i,l} \ . \nonumber
\end{align}
According to Lemma \ref{lem:ctimes},  when $\hat{v}_{i,l}\leq\beta$, 
\begin{align}
&\mathbb{E}\left(\hat{v}_{i,l}-v_i\right)=-\left(1+\beta\right)\dfrac{p_i^{\beta+1}}{1-p_i^{\beta+1}} \ ,\nonumber\\
&\text{s.t.}\quad p_i=\dfrac{v_i}{1+v_i} \nonumber \ .
\end{align}
Note that the result of $\mathbb{E}\left(\hat{v}_{i,l}-v_i\right)$ is irrelevant with $l$. Let $\epsilon_{i}=\mathbb{E}\left(\hat{v}_{i,l}-v_i\right)$.
\begin{align}
\Delta_{i,l}&\leq|\mathbf{x}_i^\top\mathbf{A}_{l}^{-1}\sum_{\tau\leq l}\sum_{j\in S_\tau}{\mathbf{x}_{j}\left(\hat{v}_{j,\tau}-v_j-\epsilon_{j}\right)}| \nonumber \\
						       &\quad+|\mathbf{x}_i^\top\mathbf{A}_{l}^{-1}\sum_{\tau\leq l}\sum_{j\in S_\tau}{\mathbf{x}_{j}\epsilon_{j}}-\mathbf{x}_i^\top\mathbf{A}_{l}^{-1}\mathbf{\theta}^*| \nonumber \ .
\end{align}
We prove the bound of two parts respectively. Let $u_i=\hat{v}_{i,\tau}-v_i-\epsilon_{i}$, $\mathbf{s}_l=\sum_{\tau\leq l}\sum_{i\in S_\tau}{\mathbf{x}_{i}u_i}$, it is easy to prove that
\begin{align}
\mathbb{E}\left(\exp\left(\gamma u_i\right)\right)\leq\exp(\dfrac{\gamma^2\beta^2}{2}) \nonumber \ ,
\end{align}
then, with Lemma 9 in~\cite{abbasi2011improved}, we can prove that with probability 
\begin{align}
1-\left(1+\dfrac{T}{d}\right)^{d/2}\exp{\left(-\dfrac{\alpha^2}{2\beta^2}\right)}, \nonumber
\end{align} 
we have that 
\begin{align}
&|\mathbf{x}_i^\top\mathbf{A}_{l}^{-1}\sum_{\tau\leq l}\sum_{i\in S_\tau}{\mathbf{x}_{i}\left(\hat{v}_{i,\tau}-v_i-\epsilon_{i}\right)}| \leq \alpha\sigma_{i,l} \ . \label{equ:vexp}
\end{align}
Moreover, with Cauchy–Schwarz inequality, the other part is bounded as
\begin{align}
|\mathbf{x}_i^\top\mathbf{A}_{l}^{-1}\sum_{\tau\leq l}\sum_{j\in S_\tau}{\mathbf{x}_{j}\epsilon_{j}} -\mathbf{x}_i^\top\mathbf{A}_{l}^{-1}\mathbf{\theta}^*|\leq\sqrt{2}\sigma_{i,l} \ . \label{equ:vtail}
\end{align}
The lemma can be proven by combining Eq.~(\ref{equ:vexp}) and Eq.~(\ref{equ:vtail}).
\end{proof}
Moreover, we prove the bound of cumulative utility deviation.
\begin{lemma}\label{lem:sigma}
Following the process in Algorithm~\ref{alg:LUMB}, the cumulative deviation between utility UCB and real utility can be bounded as
\begin{align}
\sum\nolimits_{l=1}^{L}\sum\nolimits_{i\in S_l}{\sigma_{i,l}}\leq K\sqrt{70dL\log L}\ .
\end{align}
\end{lemma}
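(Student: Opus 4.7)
The plan is to apply Cauchy--Schwarz to convert the sum of $\sigma_{i,l}$ into a sum of $\sigma_{i,l}^2$, and then bound the latter by a standard elliptical-potential / log-determinant argument (following Abbasi-Yadkori et al., 2011) adapted to the epoch--subset structure of Algorithm~\ref{alg:LUMB}. Since there are at most $KL$ terms in the double sum,
\[
\sum_{l=1}^L\sum_{i\in S_l}\sigma_{i,l}\;\leq\;\sqrt{KL}\,\sqrt{\sum_{l=1}^L\sum_{i\in S_l}\sigma_{i,l}^2},
\]
so the main task reduces to showing $\sum_{l,i}\sigma_{i,l}^2\leq 2d\log(1+KL/d)$.

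For this, the subtlety is that $\sigma_{i,l}^2=\|\mathbf{x}_i\|^2_{\mathbf{A}_l^{-1}}$ already incorporates the data from epoch $l$, whereas the classical potential argument is usually stated with the previous covariance. I would dispatch this by fixing an arbitrary ordering $S_l=\{i_1,\dots,i_{K_l}\}$ and interpolating: set $\mathbf{B}_{l,0}=\mathbf{A}_{l-1}$ and $\mathbf{B}_{l,j}=\mathbf{B}_{l,j-1}+\mathbf{x}_{i_j}\mathbf{x}_{i_j}^\top$, so $\mathbf{B}_{l,K_l}=\mathbf{A}_l$. Monotonicity $\mathbf{B}_{l,j-1}\preceq\mathbf{A}_l$ gives $\sigma_{i_j,l}^2\leq\|\mathbf{x}_{i_j}\|^2_{\mathbf{B}_{l,j-1}^{-1}}$, and because $\mathbf{A}_l\succeq\mathbf{I}$ (since $\lambda=1$ and $\|\mathbf{x}_i\|\leq 1$ by Assumption~\ref{asum:para}) we also have $\sigma_{i,l}^2\leq 1$. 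Combining $x\leq 2\log(1+x)$ on $[0,1]$ with the matrix-determinant identity $\det(\mathbf{B}_{l,j})=\det(\mathbf{B}_{l,j-1})\bigl(1+\|\mathbf{x}_{i_j}\|^2_{\mathbf{B}_{l,j-1}^{-1}}\bigr)$, the telescoping sum yields
\[
\sum_{l,i}\sigma_{i,l}^2\;\leq\;2\log\det(\mathbf{A}_L)\;\leq\;2d\log(1+KL/d),
\]
where the last step is AM--GM on the eigenvalues of $\mathbf{A}_L$, whose trace is at most $d+KL$.

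Combining the two estimates gives $\sum_{l,i}\sigma_{i,l}\leq\sqrt{2dKL\log(1+KL/d)}$, and it remains to verify this is at most $K\sqrt{70dL\log L}$. Squaring and simplifying, this reduces to $\log(1+KL/d)\leq 35K\log L$, which is satisfied with generous slack for $L\geq 2$ since $\log(1+KL/d)\leq\log(2KL)\leq 2K\log L$. The only real obstacle in the argument is the $\mathbf{A}_l$ vs.\ ``previous covariance'' mismatch pointed out above; once the chaining through $\mathbf{B}_{l,j}$ resolves it, everything else is textbook elliptical-potential bookkeeping and a short numerical check.
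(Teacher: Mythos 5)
Your proof is correct and, notably, the paper itself does not actually supply one: it only remarks that the argument is ``similar to the proof of Lemma 3 in Chu et al.\ (2011)'' and defers the details to a longer version. The route the paper gestures at is the eigenvalue-bookkeeping argument of Chu et al., which tracks how the eigenvalues of $\mathbf{A}_l$ grow as rank-one updates are added and then applies Cauchy--Schwarz; your route instead goes through the log-determinant elliptical-potential lemma of Abbasi-Yadkori et al., bounding $\sum_{l,i}\sigma_{i,l}^2\leq 2\log\det(\mathbf{A}_L)\leq 2d\log(1+KL/d)$ and finishing with the determinant--trace inequality. The two are essentially interchangeable here, but yours is cleaner and fully self-contained, and your interpolation $\mathbf{B}_{l,0}=\mathbf{A}_{l-1},\ \mathbf{B}_{l,j}=\mathbf{B}_{l,j-1}+\mathbf{x}_{i_j}\mathbf{x}_{i_j}^\top$ correctly addresses the one genuine wrinkle in this setting --- that $\sigma_{i,l}$ is measured in $\mathbf{A}_l^{-1}$, which already contains all of epoch $l$'s feature vectors, so monotonicity $\mathbf{B}_{l,j-1}\preceq\mathbf{A}_l$ is exactly what is needed. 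Each ingredient checks out: $\sigma_{i,l}^2\leq 1$ follows from $\mathbf{A}_l\succeq\mathbf{I}$ under $\lambda=1$ and $\|\mathbf{x}_i\|\leq 1$, the inequality $u\leq 2\log(1+u)$ holds on $[0,1]$, and the final numerical comparison $\log(1+KL/d)\leq 35K\log L$ holds with ample slack for $L\geq 2$ (for $L=1$ the stated bound is vacuous since its right-hand side is zero, but that is a defect of the lemma statement, not of your argument). Your derivation in fact yields the sharper constant $\sqrt{2dKL\log(1+KL/d)}\leq K\sqrt{70dL\log L}$, so nothing is missing.
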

Proof is similar to the proof of Lemma 3 in \cite{chu2011contextual}. Because of the space limitation, the proof will be attached in a longer version.
Lemma~\ref{lem:sigma} shows that the bound of cumulative deviation is sub-linear with epoch number,  and the average deviation in each epoch will vanish after enough epochs.

\subsection{Analysis of Reward}
We first estimate the deviation between estimated reward and real reward of $S_l$ with the result of Lemma~\ref{lem:sigma}.
\begin{lemma}\label{lem:dreward}
In each epoch $l$ of Algorithm~\ref{alg:LUMB}, if $\forall i \in S_l$, 
\begin{align}
0\leq v_i^{\rm UCB}-v_i\leq 2\left(\sqrt{2}+\alpha\right)\sigma_{i,l}\ ,\nonumber
\end{align}
then the cumulative deviation between estimated reward and real reward of $S_l$ is
\begin{align}
\mathbb{E}\left(\sum_{t\in \mathcal{E}_l}{\left(R\left(S_l, \mathbf{v}_l^{\rm UCB}\right)-r_{c_t}\right)}\right)\leq 2\left(\sqrt{2}+\alpha\right)\sum_{i\in S_l}{r_i\sigma_{i,l}}\ .
\end{align}
\end{lemma}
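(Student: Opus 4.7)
My plan is to reduce the claim to the simple inequality $\sum_{i \in S_l} r_i(v_i^{\rm UCB} - v_i) \le 2(\sqrt 2+\alpha)\sum_{i\in S_l} r_i\sigma_{i,l}$ (which is immediate from the hypothesis), and the work is showing the expected in-epoch deviation is bounded by $\sum_{i \in S_l} r_i(v_i^{\rm UCB}-v_i)$.

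First, I would rewrite both sums so that Lemma~\ref{lem:ctimes} can be applied. Note that $R(S_l,\mathbf{v}_l^{\rm UCB})$ is measurable with respect to the history at the start of epoch $l$ and is therefore constant across $t\in\mathcal{E}_l$, so $\sum_{t\in\mathcal{E}_l}R(S_l,\mathbf{v}_l^{\rm UCB}) = |\mathcal{E}_l|\cdot R(S_l,\mathbf{v}_l^{\rm UCB})$. Next, since $c_t\neq 0$ for $t\in\mathcal{E}_l$, I would partition by the chosen item to obtain $|\mathcal{E}_l|=\sum_{i\in S_l}\hat{v}_{i,l}$ and $\sum_{t\in\mathcal{E}_l}r_{c_t}=\sum_{i\in S_l}r_i\hat{v}_{i,l}$. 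Conditioning on the history before epoch $l$, Lemma~\ref{lem:ctimes} gives $\mathbb{E}[\hat{v}_{i,l}]=v_i$, so
\begin{align*}
\mathbb{E}\Bigl[\sum_{t\in\mathcal{E}_l}\bigl(R(S_l,\mathbf{v}_l^{\rm UCB})-r_{c_t}\bigr)\Bigr]
= V\cdot R(S_l,\mathbf{v}_l^{\rm UCB})-\sum_{i\in S_l}r_iv_i,
\end{align*}
where $V:=\sum_{i\in S_l}v_i$.

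Second, I would substitute the closed form of $R$ and perform one simple algebraic step. Writing $U:=\sum_{i\in S_l}v_i^{\rm UCB}$ and $U_r:=\sum_{i\in S_l}r_iv_i^{\rm UCB}$, the expression becomes $V\cdot U_r/(1+U) - \sum_i r_iv_i$. Because $v_i^{\rm UCB}\ge v_i\ge 0$ (this is the first inequality given in the hypothesis, hence $V\le U\le 1+U$), the scalar $V/(1+U)\le 1$, and therefore $V\cdot U_r/(1+U)\le U_r=\sum_{i\in S_l}r_iv_i^{\rm UCB}$. This is the key telescoping observation: the nonlinearity of $R$ collapses, and the deviation is at most $\sum_{i\in S_l}r_i(v_i^{\rm UCB}-v_i)$.

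Finally, I would invoke the hypothesis $v_i^{\rm UCB}-v_i\le 2(\sqrt{2}+\alpha)\sigma_{i,l}$ termwise (valid because $r_i\ge 0$) to conclude the claimed bound. The only mildly delicate point is the measurability observation justifying that $R(S_l,\mathbf{v}_l^{\rm UCB})$ can be pulled outside the expectation and that Lemma~\ref{lem:ctimes} applies conditionally on the past; otherwise the proof is a one-line convexity-free algebraic bound followed by termwise use of the UCB guarantee, so I do not anticipate a serious obstacle.
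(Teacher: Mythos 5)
Your proof is correct and follows essentially the same route as the paper's: both reduce $\mathbb{E}\bigl[\sum_{t\in\mathcal{E}_l} r_{c_t}\bigr]$ to $\sum_{i\in S_l} r_i v_i$ via the geometric structure of the epoch (you through $\mathbb{E}[\hat{v}_{i,l}]=v_i$ from Lemma~\ref{lem:ctimes}, the paper by conditioning on the epoch length), and both then bound $\mathbb{E}[|\mathcal{E}_l|]\cdot R\left(S_l,\mathbf{v}_l^{\rm UCB}\right)$ by $\sum_{i\in S_l} r_i v_{i,l}^{\rm UCB}$ using $\sum_i v_i \le \sum_i v_{i,l}^{\rm UCB}$ before concluding termwise from the hypothesis. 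If anything, your write-up is slightly more explicit than the paper's, which leaves the step $\mathbb{E}[|\mathcal{E}_l|]\,R\left(S_l,\mathbf{v}_l^{\rm UCB}\right)\le\sum_{i\in S_l} r_i v_{i,l}^{\rm UCB}$ implicit.
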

Proof is in Appendix~\ref{sec:rewardproof}. This Lemma means that the deviation of subset reward is bounded by deviation of item utilities in the subset.

\begin{lemma}\label{lem:reward}
(Lemma 4.2 in~\cite{agrawal2017mnl})With the reward defined in Eq.~(\ref{equ:reward}), suppose there are two subsets, $\tilde{S}^{\rm UCB}$ and $\tilde{S}$,  that
\begin{align}
\tilde{S}^{\rm UCB}&=\displaystyle \mathop{\argmax}_{S\in\mathcal{C}\left(\mathcal{S}\right)}{R\left(S,\mathbf{v}^{\rm UCB}\right)} \nonumber \ ,\\
\tilde{S}&=\displaystyle \mathop{\argmax}_{S\in\mathcal{C}\left(\mathcal{S}\right)}{R\left(S,\mathbf{v}\right)} \nonumber \ .
\end{align}
If $\forall i\in \tilde{S}, v_{i}^{\rm UCB}\geq v_i$, then the rewards satisfy the inequality
\begin{align}
R\left(\tilde{S}, \mathbf{v}\right)\leq R\left(\tilde{S}, \mathbf{v}^{\rm UCB}\right)\leq R\left(\tilde{S}^{\rm UCB},\mathbf{v}^{\rm UCB}\right).
\end{align}
\end{lemma}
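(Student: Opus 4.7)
The plan is to reduce the lemma to two elementary facts. The second inequality is immediate from the definition of $\tilde{S}^{\rm UCB}$: since $\tilde{S}\in\mathcal{C}(\mathcal{S})$ is a feasible subset and $\tilde{S}^{\rm UCB}$ is by definition the maximizer of $R(\cdot,\mathbf{v}^{\rm UCB})$ over $\mathcal{C}(\mathcal{S})$, we have $R(\tilde{S},\mathbf{v}^{\rm UCB})\leq R(\tilde{S}^{\rm UCB},\mathbf{v}^{\rm UCB})$ with no further work.

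The first inequality is the substantive part. The key lemma I would establish is the standard ``item-reward-exceeds-assortment-reward'' property of MNL optima: for every $i\in\tilde{S}$, one has $r_i\geq R(\tilde{S},\mathbf{v})$. I would prove this by a one-item removal swap: letting $\tilde{S}'=\tilde{S}\setminus\{i\}$, $A=\sum_{j\in\tilde{S}'}v_jr_j$, $B=1+\sum_{j\in\tilde{S}'}v_j$, a direct algebraic computation gives $R(\tilde{S},\mathbf{v})-R(\tilde{S}',\mathbf{v})=v_i(r_i-R(\tilde{S}',\mathbf{v}))/(v_i+B)$. If some $i\in\tilde{S}$ had $r_i<R(\tilde{S}',\mathbf{v})$, then $\tilde{S}'$ (which is still feasible, since $|\tilde{S}'|<|\tilde{S}|\leq K$) would give strictly larger reward than $\tilde{S}$, contradicting optimality; hence $r_i\geq R(\tilde{S}',\mathbf{v})\geq R(\tilde{S},\mathbf{v})$ for every $i\in\tilde{S}$.

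Armed with this, I would finish by a direct expansion. Set $r^{*}=R(\tilde{S},\mathbf{v})$, so by the definition of $R$ we have the identity $\sum_{i\in\tilde{S}}v_i(r_i-r^{*})=r^{*}$. Then
\begin{align}
R(\tilde{S},\mathbf{v}^{\rm UCB})-r^{*}
&=\frac{\sum_{i\in\tilde{S}}v_i^{\rm UCB}r_i-r^{*}\bigl(1+\sum_{i\in\tilde{S}}v_i^{\rm UCB}\bigr)}{1+\sum_{i\in\tilde{S}}v_i^{\rm UCB}} \nonumber\\
&=\frac{\sum_{i\in\tilde{S}}v_i^{\rm UCB}(r_i-r^{*})-r^{*}}{1+\sum_{i\in\tilde{S}}v_i^{\rm UCB}} \nonumber\\
&=\frac{\sum_{i\in\tilde{S}}(v_i^{\rm UCB}-v_i)(r_i-r^{*})}{1+\sum_{i\in\tilde{S}}v_i^{\rm UCB}}. \nonumber
\end{align}
Each factor in the numerator is non-negative: $v_i^{\rm UCB}-v_i\geq 0$ is the hypothesis of the lemma, and $r_i-r^{*}\geq 0$ is the optimality characterization proved above. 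Hence $R(\tilde{S},\mathbf{v}^{\rm UCB})\geq r^{*}=R(\tilde{S},\mathbf{v})$, as required.

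The main obstacle is really just the optimality characterization step; everything else is bookkeeping. One subtlety worth flagging is that the hypothesis $v_i^{\rm UCB}\geq v_i$ is only assumed for $i\in\tilde{S}$, so it is essential to expand the inequality with the fixed set $\tilde{S}$ and never let indices outside $\tilde{S}$ enter the calculation — which the derivation above respects, since only the coordinates of $\mathbf{v}^{\rm UCB}$ indexed by $\tilde{S}$ appear in $R(\tilde{S},\mathbf{v}^{\rm UCB})$.
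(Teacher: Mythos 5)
Your proof is correct, and it takes a route that differs from the paper's in its mechanics even though both arguments rest on the same key fact, namely that every item in the optimal assortment $\tilde{S}$ satisfies $r_i\geq R(\tilde{S},\mathbf{v})$. The paper simply asserts this (for the minimum-reward item) whereas you derive it from optimality via the one-item-removal swap, so your write-up is the more self-contained of the two. The proofs then diverge in how they use that fact: the paper orders the items of $\tilde{S}$ by increasing reward and raises the utilities one coordinate at a time, applying a mediant-type inequality ($\frac{A}{B}\leq\frac{A+\delta r}{B+\delta}\leq r$ whenever $r\geq A/B$ and $\delta\geq 0$) and iterating; you instead collapse the whole comparison into the single identity $R(\tilde{S},\mathbf{v}^{\rm UCB})-r^{*}=\bigl(\sum_{i\in\tilde{S}}(v_i^{\rm UCB}-v_i)(r_i-r^{*})\bigr)/\bigl(1+\sum_{i\in\tilde{S}}v_i^{\rm UCB}\bigr)$ with $r^{*}=R(\tilde{S},\mathbf{v})$, whose numerator is termwise non-negative. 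The identity checks out (it follows from $\sum_{i\in\tilde{S}}v_i(r_i-r^{*})=r^{*}$) and buys a shorter, order-free argument; the paper's sequential version instead tracks that the intermediate assortment reward stays below $r_{i_k}$ at each stage, which is what lets its iteration proceed. One shared caveat: the strict-improvement step in your removal argument requires $v_i>0$; if some $i\in\tilde{S}$ has $v_i=0$ the argmax can be a tie and $r_i\geq R(\tilde{S},\mathbf{v})$ may fail for that item --- an edge case the paper's proof silently assumes away as well. Your dispatch of the second inequality as immediate from the definition of $\tilde{S}^{\rm UCB}$ matches what the paper leaves implicit.
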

Brief proof is in Appendix~\ref{sec:rewardproof}. Lemma~\ref{lem:reward} shows that the estimated reward of subset $S_l$ is an upper bound of real highest reward. Then we can easily bound the regret in each epoch with Lemma~\ref{lem:dreward} and Lemma~\ref{lem:reward}.
\begin{lemma} \label{lem:eregret}
(Regret bound in a single epoch) In each epoch $l$ of Algorithm~\ref{alg:LUMB}, if $\forall i \in S_l\cup S^*$
\begin{align}
0\leq v_i^{\rm UCB}-v_i\leq 2\left(\sqrt{2}+\alpha\right)\sigma_{i,l} \ ,
\end{align}
then the regret of epoch $l$ is
\begin{align}
\mathbb{E}\left(\sum_{t\in \mathcal{E}_l}{\left(R\left(S^*, \mathbf{v}\right)-r_{c_t}\right)}\right)\leq 2\left(\sqrt{2}+\alpha\right)\sum_{i\in S_l}{r_i\sigma_{i,l}}\ .
\end{align}
\end{lemma}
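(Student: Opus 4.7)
The plan is to sandwich the instantaneous epoch regret $R(S^*,\mathbf{v})-r_{c_t}$ between two pieces that earlier lemmas already control: the \emph{optimism gap} $R(S^*,\mathbf{v})-R(S_l,\mathbf{v}_l^{\rm UCB})$, which Lemma~\ref{lem:reward} renders nonpositive under the hypothesis, and the \emph{estimation gap} $R(S_l,\mathbf{v}_l^{\rm UCB})-r_{c_t}$, whose expected sum over the epoch is exactly what Lemma~\ref{lem:dreward} bounds. The desired inequality then drops out by adding the two bounds.

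First I would invoke Lemma~\ref{lem:reward} with $\tilde{S}=S^*$ and $\tilde{S}^{\rm UCB}=S_l$. Its precondition $v_i^{\rm UCB}\geq v_i$ for $i\in S^*$ is furnished by the standing hypothesis of the present lemma, which assumes the two-sided UCB inequality on all of $S_l\cup S^*$; and the identity $S_l=\argmax_{S\in\mathcal{C}(\mathcal{S})} R(S,\mathbf{v}_l^{\rm UCB})$ is built into Algorithm~\ref{alg:LUMB}. Lemma~\ref{lem:reward} therefore yields the \emph{pathwise} (non-random) inequality
\begin{align}
R(S^*,\mathbf{v})\ \leq\ R(S_l,\mathbf{v}_l^{\rm UCB}) \ . \nonumber
\end{align}

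Next I would add and subtract $R(S_l,\mathbf{v}_l^{\rm UCB})$ inside the per-step regret and split the epoch sum as
\begin{align}
\sum_{t\in\mathcal{E}_l}\!\bigl(R(S^*,\mathbf{v})-r_{c_t}\bigr)
= \sum_{t\in\mathcal{E}_l}\!\bigl(R(S^*,\mathbf{v})-R(S_l,\mathbf{v}_l^{\rm UCB})\bigr)
+ \sum_{t\in\mathcal{E}_l}\!\bigl(R(S_l,\mathbf{v}_l^{\rm UCB})-r_{c_t}\bigr) \nonumber
\end{align}
and take expectations. Each summand of the first sum is nonpositive on every sample path by the previous step, so its expectation is $\leq 0$; the expectation of the second sum is bounded by $2(\sqrt{2}+\alpha)\sum_{i\in S_l} r_i\sigma_{i,l}$ directly by Lemma~\ref{lem:dreward}, whose hypothesis is contained in the hypothesis of the present lemma. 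Combining gives the claim.

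I do not foresee any substantive obstacle; the proof is essentially an assembly of Lemmas~\ref{lem:dreward} and \ref{lem:reward}. The only subtlety worth flagging is that $|\mathcal{E}_l|$ is itself a (geometric) random variable, so the optimism inequality must be applied \emph{pathwise} before one takes expectations — otherwise the random epoch length would tangle with the sign of the optimism gap. Because Lemma~\ref{lem:reward} gives a deterministic inequality under the hypothesis, this bookkeeping is immediate.
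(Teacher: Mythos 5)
Your proof is correct and is exactly the argument the paper intends: the paper gives no explicit proof of this lemma, stating only that it follows ``easily'' by combining Lemma~\ref{lem:dreward} and Lemma~\ref{lem:reward}, which is precisely your decomposition into the optimism gap (controlled pathwise by Lemma~\ref{lem:reward} with $\tilde{S}=S^*$, $\tilde{S}^{\rm UCB}=S_l$) and the estimation gap (controlled in expectation by Lemma~\ref{lem:dreward}). Your remark about applying the optimism inequality pathwise before taking expectations over the random epoch length $|\mathcal{E}_l|$ is a correct and worthwhile clarification that the paper leaves implicit.
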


\subsection{Upper Bound of Regret}\label{sec:ubr}
We first prove a more general version of Theorem~\ref{thm:regret} with parameters, $\alpha$ and $\beta$.
\begin{lemma}\label{lem:genregret}
Following the process in Algorithm~\ref{alg:LUMB}, if $\beta\geq 2\log_2{T}$, the cumulative regret, defined in Eq.~(\ref{equ:regret}), can be bounded,
\begin{align}
Reg\left(T,\mathbf{v}\right)\leq& 2TK\left(1+\dfrac{T}{d}\right)^{d/2}\exp{\left(-\dfrac{\alpha^2}{2\beta^2}\right)}+\dfrac{T^2K}{2^{\beta+2}}\nonumber\\
				&+2\left(\sqrt{2}+\alpha\right)K\sqrt{70dT\log T} \ .
\end{align}

\end{lemma}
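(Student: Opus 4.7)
The plan is to decompose the cumulative regret epoch by epoch and to split each epoch's contribution according to two ``good'' events: $G_1 = \{\hat{v}_{j,\tau} \leq \beta \text{ for every } \tau \leq L \text{ and every } j \in S_\tau\}$, which is the hypothesis of Lemma~\ref{lem:ucb}, and $G_2 = \{0 \leq v_{i,l}^{\rm UCB} - v_i \leq 2(\sqrt{2}+\alpha)\sigma_{i,l} \text{ for every } i \in S_l \cup S^* \text{ and every } l \leq L\}$, which is its conclusion. A small but useful preliminary observation is that every epoch contains at least its terminating ``no-choice'' time step, so $L \leq T$.

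On the event $G_1 \cap G_2$, Lemma~\ref{lem:eregret} applies in every epoch and bounds the regret of epoch $l$ by $2(\sqrt{2}+\alpha)\sum_{i \in S_l} r_i \sigma_{i,l} \leq 2(\sqrt{2}+\alpha)\sum_{i \in S_l}\sigma_{i,l}$ after using $r_i \leq 1$. Summing over $l$, invoking Lemma~\ref{lem:sigma}, and replacing $L$ by $T$ produces exactly the third term $2(\sqrt{2}+\alpha)K\sqrt{70dT\log T}$ of the claimed bound.

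The two failure events are handled by multiplying their probabilities by the worst-case regret they admit. For $G_1^c$, Lemma~\ref{lem:ctimes} together with $v_i \leq 1$ yields $\mathbb{P}(\hat{v}_{i,l} > \beta) = (v_i/(1+v_i))^{\beta+1} \leq 2^{-(\beta+1)}$, so a union bound over at most $T$ epochs with at most $K$ items each gives $\mathbb{P}(G_1^c) \leq TK \cdot 2^{-(\beta+1)}$; multiplying by the worst-case cumulative regret $T$ on this event recovers the $T^2K/2^{\beta+2}$ term up to absolute constants. For $G_2^c$, Lemma~\ref{lem:ucb} gives a per-epoch failure probability at most $(1+T/d)^{d/2}\exp(-\alpha^2/(2\beta^2))$; combined with an expected per-epoch regret of at most $K+1 \leq 2K$ (since, conditional on $S_l$, the epoch length is geometric with mean $1 + \sum_{i \in S_l} v_i \leq K+1$ and rewards are bounded by $1$) and summed over $L \leq T$ epochs, this yields the first term $2TK(1+T/d)^{d/2}\exp(-\alpha^2/(2\beta^2))$.

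The main obstacle I expect is making the per-epoch bookkeeping for $G_2^c$ rigorous: the failure event in epoch $l$ depends on past observations that also determine $S_l$ and hence the distribution of the epoch length $|\mathcal{E}_l|$, so justifying the inequality $\mathbb{E}[|\mathcal{E}_l|\,\mathbb{I}(\text{failure in } l)] \leq (K+1)\,\mathbb{P}(\text{failure in } l)$ requires conditioning on the history up through the start of epoch $l$, after which the fresh user choices inside the epoch are independent of the failure indicator and the expected length equals $1+\sum_{i \in S_l}v_i$. Once this decoupling is justified, combining the three contributions by linearity of expectation yields the stated bound.
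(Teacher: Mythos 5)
Your proposal follows essentially the same route as the paper's proof: split the regret according to whether the $\hat{v}$-boundedness and UCB-sandwich events hold, apply Lemma~\ref{lem:eregret} and Lemma~\ref{lem:sigma} on the good event to obtain the $2\left(\sqrt{2}+\alpha\right)K\sqrt{70dT\log T}$ term, and charge each bad event its probability times a crude regret bound. The remaining differences are bookkeeping — you treat the two bad events separately and use the conditional expected epoch length $1+\sum_{i\in S_l}v_i\leq K+1$ where the paper simply multiplies by the total number of time steps $T$ (your explicit conditioning on the pre-epoch history is in fact a cleaner justification of the paper's interchange of expectation and indicator), and your union bound yields $T^2K/2^{\beta+1}$ rather than $T^2K/2^{\beta+2}$, a factor-of-two slack you already flag as absorbed into absolute constants.
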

\begin{proof}
We obtain the bound of regret respectively in two situations: the item utility inequality in Lemma~\ref{lem:ucb} is (or not) satisfied.
We model the event that the item utility inequality in Lemma~\ref{lem:ucb} is not satisfied as 
\begin{align}
\mathcal{A}_{l}=\mathcal{U}_l\cup\mathcal{B}_l,
\end{align}
\begin{align}
\text{s.t.}\quad&\mathcal{U}_l=\{v_{i,l}^{\rm UCB}>v_i+2\left(\sqrt{2}+\alpha\right)\sigma_{i,l}\nonumber \\ 
					&\qquad\quad or\quad v_{i,l}^{\rm UCB} < v_i, \exists i \in S_{l}\cup S^*\}, \nonumber\\
&\mathcal{B}_l=\{\hat{v}_{i,\tau}>\beta,\exists \tau\leq l, i\in S_\tau\}. \nonumber
\end{align}
Then, it is easy to bound the probability of $\mathcal{A}_l$,
\begin{align}\label{equ:prob_nucb}
\mathbb{P}\left(\mathcal{A}_l\right)\leq 2K\left(1+\dfrac{T}{d}\right)^{d/2}\exp{\left(-\dfrac{\alpha^2}{2\beta^2}\right)}+\dfrac{lK}{2^{\beta+2}} \ .
\end{align}
Let $\tilde{\mathcal{A}}_l$ be the complement of $\mathcal{A}_{l}$. Then, the regret can be splited into two parts, that is,
\begin{align}
Reg\left(T,\mathbf{v}\right)&=\mathbb{E}\left(\sum_{l=1}^{L}\sum_{t\in\mathcal{E}_l}{\mathbb{I}\left(\mathcal{A}_{l-1}\right)\left(R\left(S^*,\mathbf{v}\right)-r_{c_t}\right)}\right) \nonumber\\
			   &\quad+\mathbb{E}\left(\sum_{l=1}^{L}\sum_{t\in\mathcal{E}_l}{\mathbb{I}\left(\tilde{\mathcal{A}}_{l-1}\right)\left(R\left(S^*,\mathbf{v}\right)-r_{c_t}\right)}\right)\nonumber \ ,
\end{align}
where $\mathbb{I}\left(\mathcal{A}\right)$ is an indicator random variable whose value is $1$ when $\mathcal{A}$ happens, othewise $0$.
We first consider the situation that $\mathcal{A}_l$ happens, according to Equation~(\ref{equ:prob_nucb})

\begin{align}
&\mathbb{E}\left(\sum\nolimits_{l=1}^{L}\sum\nolimits_{t\in\mathcal{E}_l}{\mathbb{I}\left(\mathcal{A}_l\right)\left(R\left(S^*,\mathbf{v}\right)-r_{c_t}\right)}\right) \nonumber \\
&\le\mathbb{E}\left(\sum\nolimits_{l=1}^{L}\sum\nolimits_{t\in\mathcal{E}_l}{\mathbb{I}\left(\mathcal{A}_l\right)}\right) \nonumber \\
&=\mathbb{E}\left(\sum\nolimits_{l=1}^{L}\sum\nolimits_{t\in\mathcal{E}_l}{\mathbb{E}\left(\mathbb{I}\left(\mathcal{A}_l\right)\right)}\right) \nonumber \\
&=\mathbb{E}\left(\sum\nolimits_{l=1}^{L}\sum\nolimits_{t\in\mathcal{E}_l}{\mathbb{P}\left(\mathcal{A}_l\right)}\right) \nonumber \\
&\le\mathbb{E}\left(\sum\nolimits_{l=1}^{L}\sum\nolimits_{t\in\mathcal{E}_l}{1}\right)\left(2K\left(1+\dfrac{T}{d}\right)^{d/2}\exp{\left(-\dfrac{\alpha^2}{2\beta^2}\right)}+\dfrac{TK}{2^{\beta+2}}\right) \nonumber \\
&= 2TK\left(1+\dfrac{T}{d}\right)^{d/2}\exp{\left(-\dfrac{\alpha^2}{2\beta^2}\right)}+\dfrac{T^2K}{2^{\beta+2}} \ .\label{equ:nucbregret}
\end{align}
Then, we consider that $\mathcal{A}_l$ does not happen. According to Lemma~\ref{lem:eregret} and Lemma~\ref{lem:sigma},
\begin{align}
&\mathbb{E}\left(\sum\nolimits_{l=1}^{L}\sum\nolimits_{t\in\mathcal{E}_l}{\mathbb{I}\left(\tilde{\mathcal{A}}_{l-1}\right)\left(R\left(S^*,\mathbf{v}\right)-r_{c_t}\right)}\right) \nonumber \\
&\le\mathbb{E}\left(\sum\nolimits_{l=1}^{L}\sum\nolimits_{t\in\mathcal{E}_l}{\left(R\left(S^*,\mathbf{v}\right)-r_{c_t}\right)}\right) \nonumber \\
&\le2\left(\sqrt{2}+\alpha\right)\mathbb{E}\left(\sum\nolimits_{l=1}^{L}\sum\nolimits_{i\in S_l}{r_i\sigma_{i,l}}\right) \nonumber \\
&\le2\left(\sqrt{2}+\alpha\right)\mathbb{E}\left(\sum\nolimits_{l=1}^{L}\sum\nolimits_{i\in S_l}{\sigma_{i,l}}\right) \nonumber \\
&\leq2\left(\sqrt{2}+\alpha\right)K\sqrt{70dT\log{T}} \label{equ:ucbregret} \ .
\end{align}
Finally, we can finish the proof by adding Eq.~(\ref{equ:nucbregret}) and Eq.~(\ref{equ:ucbregret}).

\end{proof}
With Lemma~\ref{lem:genregret}, Theorem~\ref{thm:regret} can be proven by setting 
\begin{align}
\beta&=2\log_2{T} \nonumber \ , \\
\alpha&=\beta\sqrt{2\log{\left(2\sqrt{T}\left(1+\dfrac{T}{d}\right)^{d/2}\right)}} \ . \nonumber
\end{align}
As our method is in the similar framework of MNL-Bandit~\cite{agrawal2017mnl} whose lower bound is $\tilde{O}\left(\sqrt{T}\right)$, our regret bound matches the lower bound up to logarithmic terms with respect to $T$.

\section{Experiments}\label{sec:exp}
\begin{figure}[t]
\centering
\includegraphics[width=0.45\textwidth]{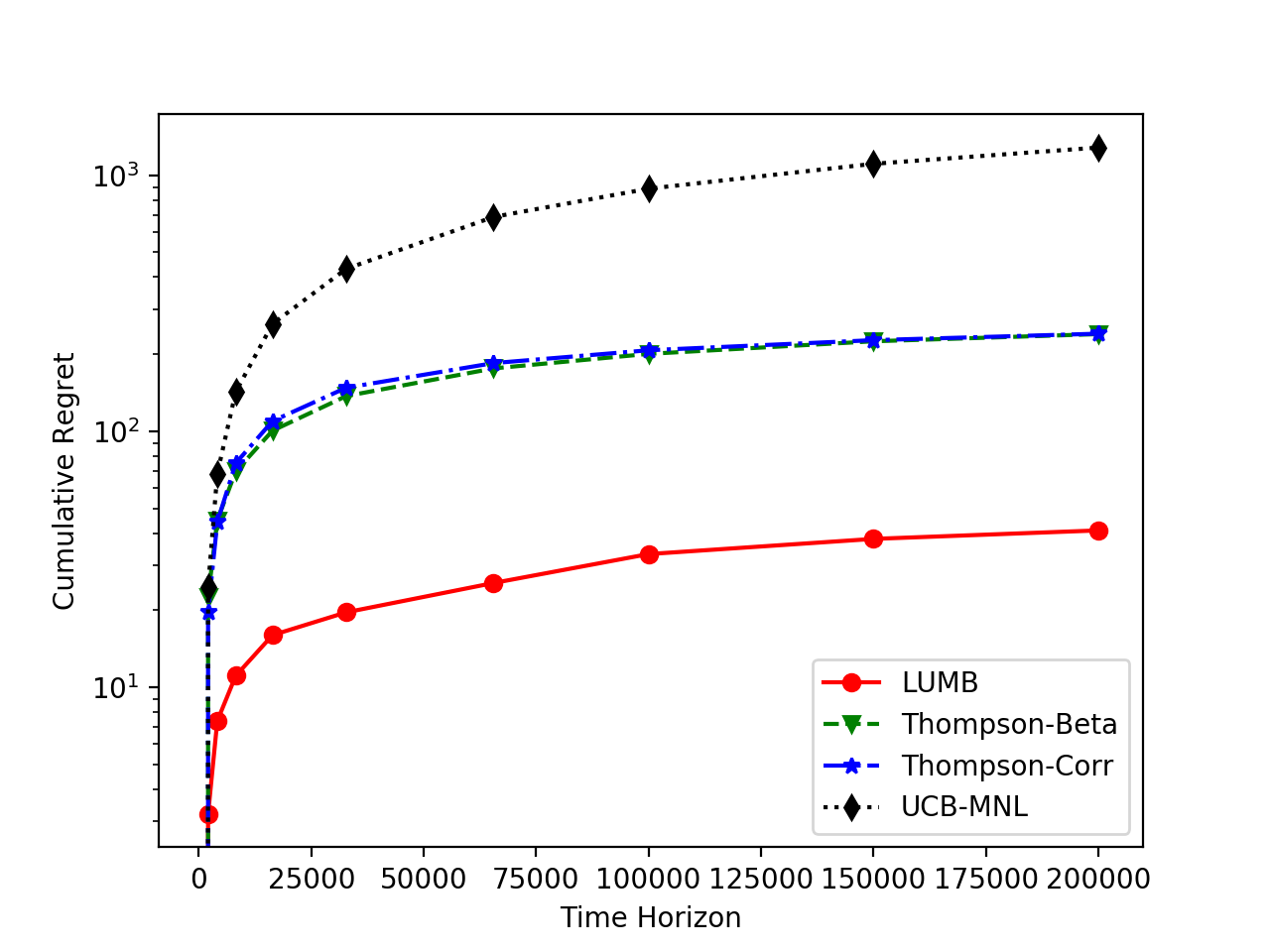}
\caption{Cumulative regret along time horizon.}
\label{fig:regret}
\end{figure}
In this section, we evaluate LUMB on synthetic data and compare to three existing alternative algorithms. We demonstrate the superiority of LUMB on cumulative regret. Moreover, we show that the estimated linear parameters of utility function and utilities will asymptotically converge to the real value.
\subsection{Setting}
The synthetic data is generated randomly. $N$ rewards are sampled from interval $(0,1]$ uniformly. $d$-dimension parameter vector of utility function, $\bm{\theta}^*$, is sampled from $[0,1]^d$ uniformly, then is normalized to $1$. $N$ $d$-dimension feature vectors are sampled from $[0,1]^d$ uniformly. To follow the experiment setting in \cite{agrawal2017thompson}, feature vectors are normalized so that item utilities distribute uniformly on $[0,1]$. Experiments are all performed on ten randomly generated data sets and the results show below are all average of results on these data sets.

Three alternative algorithms are compared:
\begin{itemize}
\item UCB-MNL~\cite{agrawal2017mnl}: This algorithm proposes a UCB approach with MNL choice model. 
\item Thompson-Beta~\cite{agrawal2017thompson}: This algorithm proposes a Thompson sampling approach with MNL choice model. 
\item Thompson-Corr~\cite{agrawal2017thompson}: This algorithm is a variant of Thompson-Beta which samples item utilities with correlated sampling.
\end{itemize}

\subsection{Results}
We conduct empirical experiments on synthetic data sets with $N=1000$,$d=10$.  Subset size $K$ is set to $10$.  Figure~\ref{fig:regret} shows the cumulative regret on the synthetic data sets, which is normalized by best reward, \textit{i.e.}, $Reg\left(t,\mathbf{v}\right)/R\left(S^*,\mathbf{v}\right)$. Note that the axis of cumulative regret is plot in a logarithm style for the convenience of observing the trend of LUMB regret on time horizon. The cumulative regrets increase slower when time step increases. Besides, we can see that the cumulative regret of LUMB is much smaller than the alternative algorithms through the time horizon.

\begin{figure}[!t]
\centering
\includegraphics[width=0.45\textwidth]{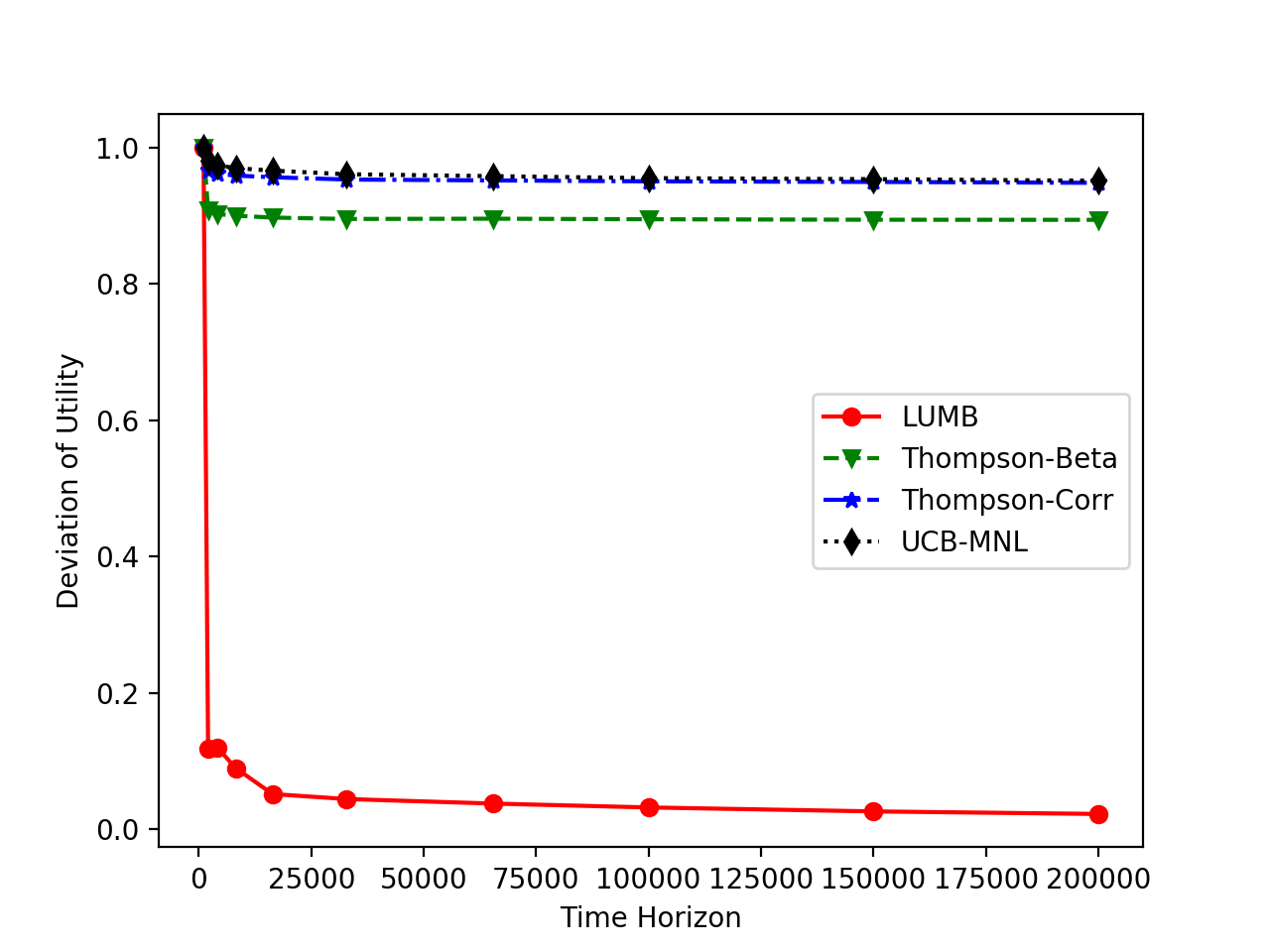}
\includegraphics[width=0.45\textwidth]{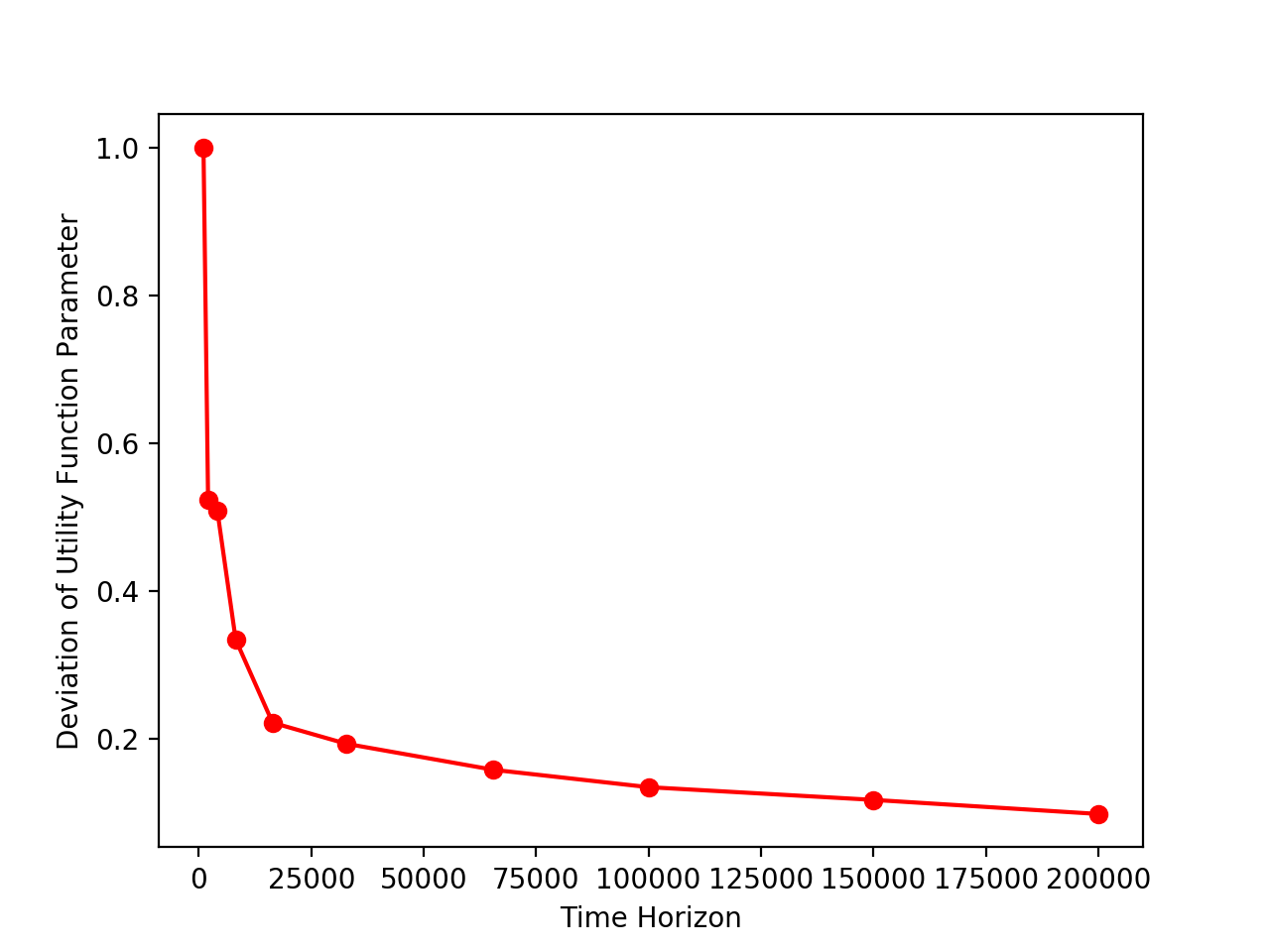}
\caption{Deviation of item utility vector (upper figure) and linear utility function parameter vector (lower figure) over time horizon. }
\label{fig:utility}
\end{figure}

We evaluate the convergence of utility vector on synthetic data sets with $N=1000$,$d=10$. Figure~\ref{fig:utility} shows the deviation between estimated mean utility and real utility, which is normalized by the norm of real utility, \textit{i.e.}, $\|\bm{\theta}_t^\top\mathbf{X}-\mathbf{v}\|/\|\mathbf{v}\|$. The deviation of LUMB decreases fast in the early stage and achieve smaller deviation compared to the alternative algorithms. 

Moreover, we evaluate the deviation of estimated linear parameter vector in Figure~\ref{fig:utility}. The deviation is normalized by the norm of real parameters, \textit{i.e.}, $\|\bm{\theta}_t-\bm{\theta}^*\|/\|\bm{\theta}^*\|$. Note that the deviation also decreases fast in the early stage and asymptotically converges to zero finally. This demonstrates that LUMB can correctly estimate the linear parameters.


\section{Conclusion}\label{sec:conclusion}
We study the sequential subset selection problem with linear utility MNL choice model, and propose a UCB-style algorithm, LUMB. Also, an upper regret bound, $O\left(dK\sqrt{T}\left(\log{T}\right)^2\right)$, is established, which is free of candidate item number. Experiments show the performance of LUMB. In the future work, we are interested in extending the idea to other choice models such as nested MNL.

\bibliographystyle{named}

\begin{thebibliography}{}

\bibitem[\protect\citeauthoryear{Abbasi-Yadkori \bgroup \em et al.\egroup
  }{2011}]{abbasi2011improved}
Yasin Abbasi-Yadkori, D{\'a}vid P{\'a}l, and Csaba Szepesv{\'a}ri.
\newblock Improved algorithms for linear stochastic bandits.
\newblock In {\em NIPS}, pages 2312--2320, 2011.

\bibitem[\protect\citeauthoryear{Agrawal and Goyal}{2012}]{agrawal2012analysis}
Shipra Agrawal and Navin Goyal.
\newblock Analysis of thompson sampling for the multi-armed bandit problem.
\newblock In {\em COLT}, pages 39--1, 2012.

\bibitem[\protect\citeauthoryear{Agrawal and Goyal}{2013}]{agrawal2013thompson}
Shipra Agrawal and Navin Goyal.
\newblock Thompson sampling for contextual bandits with linear payoffs.
\newblock In {\em ICML}, pages 127--135, 2013.

\bibitem[\protect\citeauthoryear{Agrawal \bgroup \em et al.\egroup
  }{2017a}]{agrawal2017mnl}
Shipra Agrawal, Vashist Avadhanula, Vineet Goyal, and Assaf Zeevi.
\newblock Mnl-bandit: A dynamic learning approach to assortment selection.
\newblock {\em arXiv preprint arXiv:1706.03880}, 2017.

\bibitem[\protect\citeauthoryear{Agrawal \bgroup \em et al.\egroup
  }{2017b}]{agrawal2017thompson}
Shipra Agrawal, Vashist Avadhanula, Vineet Goyal, and Assaf Zeevi.
\newblock Thompson sampling for the mnl-bandit.
\newblock {\em arXiv preprint arXiv:1706.00977}, 2017.

\bibitem[\protect\citeauthoryear{Audibert \bgroup \em et al.\egroup
  }{2013}]{audibert2013regret}
Jean-Yves Audibert, S{\'e}bastien Bubeck, and G{\'a}bor Lugosi.
\newblock Regret in online combinatorial optimization.
\newblock {\em Mathematics of Operations Research}, 39(1):31--45, 2013.

\bibitem[\protect\citeauthoryear{Auer}{2002}]{auer2002using}
Peter Auer.
\newblock Using confidence bounds for exploitation-exploration trade-offs.
\newblock {\em JMLR}, 3(Nov):397--422, 2002.

\bibitem[\protect\citeauthoryear{Bernstein \bgroup \em et al.\egroup
  }{2017}]{bernstein2017dynamic}
Fernando Bernstein, Sajad Modaresi, and Denis Saur{\'e}.
\newblock A dynamic clustering approach to data-driven assortment
  personalization.
\newblock 2017.

\bibitem[\protect\citeauthoryear{Bubeck and Cesa{-}Bianchi}{2012}]{Bubeck12}
S{\'{e}}bastien Bubeck and Nicol{\`{o}} Cesa{-}Bianchi.
\newblock Regret analysis of stochastic and nonstochastic multi-armed bandit
  problems.
\newblock {\em CoRR}, abs/1204.5721, 2012.

\bibitem[\protect\citeauthoryear{Caro and Gallien}{2007}]{caro2007dynamic}
Felipe Caro and J{\'e}r{\'e}mie Gallien.
\newblock Dynamic assortment with demand learning for seasonal consumer goods.
\newblock {\em Management Science}, 53(2):276--292, 2007.

\bibitem[\protect\citeauthoryear{Cesa-Bianchi and
  Lugosi}{2012}]{cesa2012combinatorial}
Nicolo Cesa-Bianchi and G{\'a}bor Lugosi.
\newblock Combinatorial bandits.
\newblock {\em Journal of Computer and System Sciences}, 78(5):1404--1422,
  2012.

\bibitem[\protect\citeauthoryear{Chen \bgroup \em et al.\egroup
  }{2013}]{chen2013combinatorial}
Wei Chen, Yajun Wang, and Yang Yuan.
\newblock Combinatorial multi-armed bandit: General framework and applications.
\newblock In {\em ICML}, pages 151--159, 2013.

\bibitem[\protect\citeauthoryear{Chen \bgroup \em et al.\egroup
  }{2016}]{chen2016combinatorial}
Wei Chen, Wei Hu, Fu~Li, Jian Li, Yu~Liu, and Pinyan Lu.
\newblock Combinatorial multi-armed bandit with general reward functions.
\newblock In {\em NIPS}, pages 1659--1667, 2016.

\bibitem[\protect\citeauthoryear{Cheung and
  Simchi-Levi}{2017}]{cheung2017assortment}
Wang~Chi Cheung and David Simchi-Levi.
\newblock Assortment optimization under unknown multinomial logit choice
  models.
\newblock {\em arXiv preprint arXiv:1704.00108}, 2017.

\bibitem[\protect\citeauthoryear{Chu \bgroup \em et al.\egroup
  }{2011}]{chu2011contextual}
Wei Chu, Lihong Li, Lev Reyzin, and Robert Schapire.
\newblock Contextual bandits with linear payoff functions.
\newblock In {\em IJCAI}, pages 208--214, 2011.

\bibitem[\protect\citeauthoryear{Dani \bgroup \em et al.\egroup
  }{2008}]{dani2008stochastic}
Varsha Dani, Thomas Hayes, and Sham Kakade.
\newblock Stochastic linear optimization under bandit feedback.
\newblock In {\em COLT}, pages 355--366, 2008.

\bibitem[\protect\citeauthoryear{Davis \bgroup \em et al.\egroup
  }{2013}]{davis2013assortment}
James Davis, Guillermo Gallego, and Huseyin Topaloglu.
\newblock Assortment planning under the multinomial logit model with totally
  unimodular constraint structures.
\newblock {\em Technical Report, Cornell University}, pages 335--357, 2013.

\bibitem[\protect\citeauthoryear{Gai \bgroup \em et al.\egroup
  }{2012}]{gai2012combinatorial}
Yi~Gai, Bhaskar Krishnamachari, and Rahul Jain.
\newblock Combinatorial network optimization with unknown variables:
  Multi-armed bandits with linear rewards and individual observations.
\newblock {\em TON}, 20(5):1466--1478, 2012.

\bibitem[\protect\citeauthoryear{Golrezaei \bgroup \em et al.\egroup
  }{2014}]{golrezaei2014real}
Negin Golrezaei, Hamid Nazerzadeh, and Paat Rusmevichientong.
\newblock Real-time optimization of personalized assortments.
\newblock {\em Management Science}, 60(6):1532--1551, 2014.

\bibitem[\protect\citeauthoryear{Kallus and Udell}{2016}]{kallus2016dynamic}
Nathan Kallus and Madeleine Udell.
\newblock Dynamic assortment personalization in high dimensions.
\newblock {\em arXiv preprint arXiv:1610.05604}, 2016.

\bibitem[\protect\citeauthoryear{Kveton \bgroup \em et al.\egroup
  }{2015}]{kveton2015tight}
Branislav Kveton, Zheng Wen, Azin Ashkan, and Csaba Szepesvari.
\newblock Tight regret bounds for stochastic combinatorial semi-bandits.
\newblock In {\em Artificial Intelligence and Statistics}, pages 535--543,
  2015.

\bibitem[\protect\citeauthoryear{Lagr\'{e}e \bgroup \em et al.\egroup
  }{2016}]{Paul16}
Paul Lagr\'{e}e, Claire Vernade, and Olivier Cappe.
\newblock Multiple-play bandits in the position-based model.
\newblock In {\em NIPS}, pages 1597--1605. 2016.

\bibitem[\protect\citeauthoryear{Luce}{2005}]{luce2005individual}
Duncan Luce.
\newblock {\em Individual choice behavior: A theoretical analysis}.
\newblock Courier Corporation, 2005.

\bibitem[\protect\citeauthoryear{Plackett}{1975}]{plackett1975analysis}
Robin Plackett.
\newblock The analysis of permutations.
\newblock {\em Applied Statistics}, pages 193--202, 1975.

\bibitem[\protect\citeauthoryear{Rusmevichientong \bgroup \em et al.\egroup
  }{2010}]{rusmevichientong2010dynamic}
Paat Rusmevichientong, Zuo-Jun~Max Shen, and David Shmoys.
\newblock Dynamic assortment optimization with a multinomial logit choice model
  and capacity constraint.
\newblock {\em Operations research}, 58(6):1666--1680, 2010.

\bibitem[\protect\citeauthoryear{Russo and Van~Roy}{2014}]{russo2014learning}
Daniel Russo and Benjamin Van~Roy.
\newblock Learning to optimize via posterior sampling.
\newblock {\em Mathematics of Operations Research}, 39(4):1221--1243, 2014.

\bibitem[\protect\citeauthoryear{Saur{\'e} and Zeevi}{2013}]{saure2013optimal}
Denis Saur{\'e} and Assaf Zeevi.
\newblock Optimal dynamic assortment planning with demand learning.
\newblock {\em Manufacturing \& Service Operations Management}, 15(3):387--404,
  2013.

\bibitem[\protect\citeauthoryear{Thompson}{1933}]{thompson1933}
William Thompson.
\newblock On the likelihood that one unknown probability exceeds another in
  view of the evidence of two samples.
\newblock {\em Biometrika}, 25(3-4):285--294, 1933.

\bibitem[\protect\citeauthoryear{Wang \bgroup \em et al.\egroup
  }{2017}]{wang2017efficient}
Yingfei Wang, Hua Ouyang, Chu Wang, Jianhui Chen, Tsvetan Asamov, and Yi~Chang.
\newblock Efficient ordered combinatorial semi-bandits for whole-page
  recommendation.
\newblock In {\em AAAI}, pages 2746--2753, 2017.

\bibitem[\protect\citeauthoryear{Wen \bgroup \em et al.\egroup
  }{2015}]{wen2015efficient}
Zheng Wen, Branislav Kveton, and Azin Ashkan.
\newblock Efficient learning in large-scale combinatorial semi-bandits.
\newblock In {\em ICML}, pages 1113--1122, 2015.

\end{thebibliography}

\clearpage
\normalsize
\appendix
\section{Proof of Lemmas}

\subsection{Proof of Lemma~\ref{lem:ctimes}}\label{sec:vprop}
\begin{proof}
Let $V=\displaystyle \mathop{\sum}_{i\in S_{l}}{v_i}$
\begin{align}
&\mathcal{P}\left(\hat{v}_{i,l}=\beta\right) \nonumber \\
=&\sum_{n\geq\beta}\mathcal{P}\left(|\mathcal{E}_{l}|=n+1\right)\mathcal{P}\left(\hat{v}_{i,l}=\beta||\mathcal{E}_{l}|=n+1\right) \nonumber\\
=&\sum_{n\geq\beta}{\dfrac{1}{1+V}\left(\dfrac{V}{1+V}\right)^n{n \choose \beta}}\left(\dfrac{v_i}{V}\right)^\beta\left(\dfrac{V-v_i}{V}\right)^{n-\beta} \nonumber
\end{align}
Let $\beta=0$, we have $\mathcal{P}\left(\hat{v}_{i,l}=0\right)=\dfrac{1}{1+v_i}$.

With ${n \choose \beta}={n-1 \choose \beta}+{n-1 \choose \beta-1}$, we have 
\begin{align}
\mathcal{P}\left(\hat{v}_{i,l}=\beta\right)=\dfrac{v_i}{1+v_i}\mathcal{P}\left(\hat{v}_{i,l}=\beta-1\right) \nonumber \ .
\end{align}
Finally,
\begin{align}
\mathcal{P}\left(\hat{v}_{i,l}=\beta\right)&=\dfrac{1}{1+v_i}\left(\dfrac{v_i}{1+v_i}\right)^\beta \nonumber \ , \\
\mathbb{E}\left(\hat{v}_{i,l}\right)&=v_i \ . \nonumber
\end{align}
\end{proof}

\subsection{Proof of Lemma \ref{lem:ucb}}\label{sec:ucbproof}
\begin{lemma}\label{lem:vsubgau}
Suppose random variable $X$ satisfies that $a\leq X\leq b$ and $\mathbb{E}\left(X\right)=0$, then
\begin{align}
\mathbb{E}\left(e^{\lambda X}\right)\leq e^{\lambda^2\left(b-a\right)^2/2} \ .
\end{align}
\end{lemma}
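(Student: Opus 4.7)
The plan is to prove this as the standard Hoeffding lemma (in fact with the stronger constant $1/8$, which immediately implies the stated bound with constant $1/2$), by exploiting the convexity of $x\mapsto e^{\lambda x}$ on the interval $[a,b]$.

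First I would note that, since $\mathbb{E}(X)=0$ and $a\le X\le b$, we must have $a\le 0\le b$, so $b-a>0$ unless $X$ is almost surely $0$ (in which case the inequality is trivial). Then, by convexity of the exponential, for every $x\in[a,b]$ one has the chord bound
\begin{equation*}
e^{\lambda x}\;\le\;\frac{b-x}{b-a}\,e^{\lambda a}+\frac{x-a}{b-a}\,e^{\lambda b}.
\end{equation*}
Taking expectations and using $\mathbb{E}(X)=0$ kills the linear term, giving
\begin{equation*}
\mathbb{E}\!\left(e^{\lambda X}\right)\;\le\;\frac{b}{b-a}\,e^{\lambda a}-\frac{a}{b-a}\,e^{\lambda b}.
\end{equation*}

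Next I would reparametrize by setting $p=-a/(b-a)\in[0,1]$ and $h=\lambda(b-a)$. The right-hand side then becomes $e^{-ph}\bigl((1-p)+p\,e^{h}\bigr)$, so it suffices to prove that the function
\begin{equation*}
\psi(h)\;=\;-ph+\log\!\bigl((1-p)+p\,e^{h}\bigr)
\end{equation*}
satisfies $\psi(h)\le h^2/8$ for all real $h$. A direct differentiation gives $\psi(0)=0$, $\psi'(0)=0$, and
\begin{equation*}
\psi''(h)\;=\;\frac{p(1-p)e^h}{\bigl((1-p)+p\,e^h\bigr)^2}\;=\;\frac{uv}{(u+v)^2}\;\le\;\frac14,
\end{equation*}
where $u=p\,e^h$, $v=1-p$, the last inequality being AM--GM. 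Taylor's theorem with remainder then yields $\psi(h)\le h^2/8$, hence $\mathbb{E}(e^{\lambda X})\le e^{\lambda^2(b-a)^2/8}\le e^{\lambda^2(b-a)^2/2}$, which is the claim.

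The only slightly non-routine step is the second-derivative bound $\psi''\le 1/4$, but even that reduces to the elementary inequality $uv\le (u+v)^2/4$; the rest is convexity plus a two-term Taylor estimate, so I do not expect any real obstacle in writing out the proof.
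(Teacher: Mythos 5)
Your proof is correct, but it takes a genuinely different route from the paper's. The paper proves the lemma by symmetrization: it introduces an independent copy $X'$ of $X$, applies Jensen's inequality to get $\mathbb{E}\left(e^{\lambda X}\right)\leq\mathbb{E}\left(e^{\lambda\left(X-X'\right)}\right)$, symmetrizes the power series of the exponential so that the odd terms cancel, and then uses the elementary bound $\left(2k\right)!\geq 2^{k}k!$ together with $|X-X'|\leq b-a$ to compare the resulting series with that of $e^{\lambda^{2}\left(b-a\right)^{2}/2}$. That argument needs no calculus and lands directly on the constant $1/2$ in the statement. You instead give the classical Hoeffding-lemma proof: the convexity chord bound, reparametrization to $\psi\left(h\right)=-ph+\log\left(\left(1-p\right)+pe^{h}\right)$, the second-derivative bound $\psi''\leq 1/4$ via AM--GM, and a two-term Taylor estimate. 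Your route is slightly more computational but buys the sharper constant $1/8$, of which the paper's $1/2$ is an immediate corollary; the paper's route is shorter and purely algebraic. Your handling of the degenerate case $a=b$ and the observation that $\mathbb{E}\left(X\right)=0$ forces $a\leq 0\leq b$ (so that $p\in\left[0,1\right]$) are both correct and close the only potential gaps. Either proof is a valid substitute for the other in the downstream use within Lemma~\ref{lem:ucb}, since only the sub-Gaussian bound with some universal constant is needed there.
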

\begin{proof}
Let $X^{'}$ be a random variable with same distribution as $X$ but independent of $X$. According to Jensen's inequality:
\begin{align}
\mathbb{E}_{X}\left(e^{\lambda X}\right) &= \mathbb{E}_X\left(e^{\lambda \left(X-\mathbb{E}_{X^{'}}\left(X^{'}\right)\right)}\right) \nonumber\\
						&\leq \mathbb{E}_{X,X^{'}}\left(e^{\lambda\left(X-X^{'}\right)}\right) \nonumber
\end{align}
As $X$ and $X^{'}$ are with same distribution, we have $\mathbb{E}_{X,X^{'}}\left(e^{\lambda\left(X-X^{'}\right)}\right)=\mathbb{E}_{X,X^{'}}\left(e^{\lambda\left(X^{'}-X\right)}\right)$. 
\begin{align}
&\mathbb{E}_{X,X^{'}}\left(e^{\lambda\left(X-X^{'}\right)}\right) \nonumber \\
=&\dfrac{1}{2}\mathbb{E}_{X,X^{'}}\left(e^{\lambda\left(X-X^{'}\right)}+e^{\lambda\left(X^{'}-X\right)}\right) \nonumber \\
=&\dfrac{1}{2}\mathbb{E}_{X,X^{'}}\left(\sum_{k=0}^{\infty}\dfrac{\lambda^k\left(X-X^{'}\right)^k}{k!}+\sum_{k=0}^{\infty}\dfrac{\lambda^k\left(X^{'}-X\right)^k}{k!}\right) \nonumber \\
=&\mathbb{E}_{X,X^{'}}\left(\sum_{k=0}^{\infty}\dfrac{\lambda^{2k}\left(X-X^{'}\right)^{2k}}{\left(2k\right)!}\right) \nonumber \\
\leq&\mathbb{E}_{X,X^{'}}\left(\sum_{k=0}^{\infty}\dfrac{\lambda^{2k}\left(X-X^{'}\right)^{2k}}{2^kk!}\right) \nonumber \\
\leq&\mathbb{E}_{X,X^{'}}\left(e^{\lambda^2\left(X-X^{'}\right)^2/2}\right) \nonumber \\
\leq&e^{\lambda^2\left(b-a\right)^2/2}
\end{align}
\end{proof}
We give the proof of Lemma~\ref{lem:ucb} below.
\begin{proof}
\begin{align}
v_{i,l}^{\rm UCB}-v_i=\left(\bm{\theta}_l^\top\mathbf{x}_i-v_i\right)+\left(\sqrt{2}+\alpha\right)\sigma_{i,l} \ .
\end{align}
We just need to prove the bound of $\Delta_{i,l}=|\bm{\theta}_l^\top\mathbf{x}_i-v_i|$.

According to Lemma \ref{lem:ctimes},  when $\hat{v}_{i,l}\leq\beta$, 
\begin{align}
&\mathbb{E}\left(\hat{v}_{i,l}-v_i\right)=-\left(1+\beta\right)\dfrac{p_i^{\beta+1}}{1-p_i^{\beta+1}} \ ,\\
&s.t.\quad p_i=\dfrac{v_i}{1+v_i} \nonumber \ .
\end{align}
According to Assumption~\ref{asum:para}, we can have 
\begin{align}
v_i={\theta^*}^\top\mathbf{x}_i\le 1\ ,\nonumber
\end{align}
then
\begin{align}
|\mathbb{E}\left(\hat{v}_{i,l}-v_i\right)|\le \dfrac{1+\beta}{2^\beta} \le \dfrac{1+\log_2{T}}{T^2}
\end{align}
Note that $\mathbb{E}\left(\hat{v}_{i,l}-v_i\right)$ is irrelevant with $l$ and will be very small when $T$ is large. Let $\epsilon_{i}=\mathbb{E}\left(\hat{v}_{i,l}-v_i\right)$.
\begin{align}
\Delta_{i,l}&=|\mathbf{x}_i^\top\mathbf{A}_{l}^{-1}\mathbf{b}_{l}-\mathbf{x}_i^\top\mathbf{A}_{l}^{-1}\mathbf{A}_{l}\mathbf{\theta}^*| \nonumber \\
						       &=|\mathbf{x}_i^\top\mathbf{A}_{l}^{-1}\sum_{\tau\leq l}\sum_{j\in S_\tau}{\mathbf{x}_{j}\hat{v}_{j,\tau}}\nonumber\\
						       &-\mathbf{x}_i^\top\mathbf{A}_l^{-1}\left(\mathbf{I}_d+\sum_{\tau\leq l}\sum_{j\in S_\tau}{\mathbf{x}_{j}\mathbf{x}_{j}}^\top\right)\bm{\theta}^*| \nonumber\\
						       &\leq|\mathbf{x}_i^\top\mathbf{A}_{l}^{-1}\sum_{\tau\leq l}\sum_{j\in S_\tau}{\mathbf{x}_{j}\left(\hat{v}_{j,\tau}-v_j-\epsilon_{j}\right)}| \nonumber \\
						       &\quad+|\mathbf{x}_i^\top\mathbf{A}_{l}^{-1}\sum_{\tau\leq l}\sum_{j\in S_\tau}{\mathbf{x}_{j}\epsilon_{j}}-\mathbf{x}_i^\top\mathbf{A}_{l}^{-1}\mathbf{\theta}^*| \nonumber \ .
\end{align}
We prove the bound of two parts respectively.

Let $\mathbf{s}_l=\sum_{\tau\leq l}\sum_{i\in S_\tau}{\mathbf{x}_{i}\left(\hat{v}_{i,\tau}-v_i-\epsilon_{i}\right)}$. With Lemma 9 in~\cite{abbasi2011improved} and Lemma~\ref{lem:vsubgau}, we can prove that with probability $1-\delta$,
\begin{align}
&|\mathbf{x}_i^\top\mathbf{A}_{l}^{-1}\sum_{\tau\leq l}\sum_{i\in S_\tau}{\mathbf{x}_{i}\left(\hat{v}_{i,\tau}-v_i-\epsilon_{i}\right)}|\nonumber \\
&\leq |\mathbf{x}_i^\top\mathbf{A}_l^{-1}\mathbf{x}_i||\mathbf{s}_l^\top\mathbf{A}_l^{-1}\mathbf{s}_l|\nonumber \\
&\leq \sigma_{i,l}\beta\sqrt{2\log\dfrac{\left(1+T/d\right)^{d/2}}{\delta}} \nonumber
\end{align}
Let $\alpha=\beta\sqrt{2\log\dfrac{\left(1+T/d\right)^{d/2}}{\delta}}$. We have that with probability $1-\left(1+\dfrac{t}{d}\right)^{d/2}\exp{\left(-\dfrac{\alpha^2}{2\beta^2}\right)}$, 
\begin{align}
&|\mathbf{x}_i^\top\mathbf{A}_{l}^{-1}\sum_{\tau\leq l}\sum_{i\in S_\tau}{\mathbf{x}_{i}\left(\hat{v}_{i,\tau}-v_i-\epsilon_{i}\right)}| \leq \alpha\sigma_{i,l} \ . \label{equ:vexp}
\end{align}
Moreover, when $K\le T$, we have that
\begin{align}
&|\mathbf{x}_i^\top\mathbf{A}_{l}^{-1}\sum_{\tau\leq l}\sum_{j\in S_\tau}{\mathbf{x}_{j}\epsilon_{j}} -\mathbf{x}_i^\top\mathbf{A}_{l}^{-1}\mathbf{\theta}^*|\nonumber\\
\leq&\sqrt{\mathbf{x}_i^\top\mathbf{A}_{l}^{-1}\left(\mathbf{I}_d+\sum_{\tau\leq l}\sum_{j\in S_l}{\mathbf{x}_{j}\mathbf{x}_{j}^\top}\right)\mathbf{A}_{l}^{-1}\mathbf{x}_i}\nonumber\\
&\quad\cdot\sqrt{\sum_{\tau\leq l}\sum_{j\in S_l}{\epsilon_{j}^2}+\bm{\theta}^{*\top}\bm{\theta}^*}\nonumber \\
\leq&\sqrt{2}\sigma_{i,l} \ . \label{equ:vtail}
\end{align}
Combine the result of (\ref{equ:vexp}) and (\ref{equ:vtail}), the lemma can be proven.
\end{proof}

\subsection{Proof of Lemma \ref{lem:dreward}}\label{sec:rewardproof}
\begin{proof}
As $\mathcal{E}_l$ follows geometric distribution, $\mathbb{E}\left(|\mathcal{E}_l|\right)=1+\sum_{i\in S_l}v_i$.
In the last time step in $\mathcal{E}_l$, nothing will be chosen and the reward is zero. And, in the other time steps, the expected reward will be $\dfrac{\sum_{i\in S_l}{v_ir_i}}{\sum_{i\in S_l}{v_i}}$. So, the expected cumulative reward in epoch $l$ is:
\begin{align}
\mathbb{E}\left(\sum_{t\in\mathcal{E}_l}{r_{c_t}}\right)&=\dfrac{\sum_{i\in S_l}{v_ir_i}}{\sum_{i\in S_l}{v_i}}\left(\mathbb{E}\left(|\mathcal{E}_l|\right)-1\right) \nonumber \\
								     &=\sum_{i\in S_l}{v_ir_i} \nonumber \ .
\end{align}
Then, 
\begin{align}
\mathbb{E}\left(\sum_{t\in \mathcal{E}_l}{\left(R\left(S_l, \mathbf{v}_l^{\rm UCB}\right)-r_{c_t}\right)}\right)&\leq\sum_{i\in S_l}{r_i\left(v_{i,l}^{\rm UCB}-v_i\right)}\nonumber\\
														&\leq 2\left(\sqrt{2}+\alpha\right)\sum_{i\in S_l}{r_i\sigma_{i,l}}\ .
\end{align}
\end{proof}
\subsection{Proof of Lemma \ref{lem:reward}}\label{sec:rewardproof}
\begin{proof}
Assume $[i_1,i_2,\cdots,i_{|S|}]$ is an arrange that $r_{i_k}\leq r_{i_l},\forall k\leq l$. As $r_{i_1}\geq R(S,\mathbf{v})$ and $u_{i_1}\geq v_{i_1}$, we have
\begin{align}
R(S,\mathbf{v})&=\dfrac{\sum_{i\in S}{v_ir_i}}{1+\sum_{i\in S}{v_i}} \nonumber \\
		        &\leq\dfrac{\sum_{i\in S}{v_ir_i}+(u_{i_1}-v_{i_1})r_{i_1}}{1+\sum_{i\in S}{v_i}+(u_{i_1}-v_{i_1})} \nonumber
\end{align}
, and 
\begin{align}
\dfrac{\sum_{i\in S}{v_ir_i}+(u_{i_1}-v_{i_1})r_{i_1}}{1+\sum_{i\in S}{v_i}+(u_{i_1}-v_{i_1})}&\leq r_{i_1} \nonumber
\end{align}
. By performing the process above iteratively, it can be proven that
\begin{align}
R(S,\mathbf{v})&\leq\dfrac{\sum_{i\in S}{v_ir_i}+\sum_{l\leq |S|}(u_{i_l}-v_{i_l})r_{i_l}}{1+\sum_{i\in S}{v_i}+\sum_{l\leq |S|}(u_{i_l}-v_{i_l})} 
		        &=R(S,\mathbf{u}) \nonumber
\end{align}
\end{proof}

\end{document}